\documentclass[onecolumn]{article}
\usepackage{epsfig}
\usepackage{booktabs}
\usepackage{amssymb,amsmath,amsthm,multirow}
\usepackage{amsfonts,graphicx,subfigure}
\usepackage{float}
\usepackage{caption}
\usepackage{subcaption}
\usepackage{algorithm}
\usepackage{algpseudocode}
\usepackage{booktabs}
\usepackage{multirow}
\usepackage{rotating}
\newtheorem{theorem}{Theorem}
\newtheorem{cor}{Corollary}
\newtheorem{lemma}{Lemma}
\newtheorem{proposition}{Proposition}
\newtheorem{dfn}{Definition}

\begin{document}
\title{Cluster-Adaptive Feature Extraction and its Theoretical Foundation with Minkowski Weighted k-Means}
\author{Renato Cordeiro de Amorim\thanks{School of Computer Science and
Electronic Engineering, University of Essex, Wivenhoe, UK. r.amorim@essex.ac.uk} \and Vladimir Makarenkov\thanks{D\'epartement d'informatique, Universit\'e du Québec \`a Montr\'eal, C.P. 8888 succ. Centre-Ville, Montreal (QC) H3C 3P8 Canada.} \thanks{Mila - Quebec AI Institute, Montreal, QC, Canada.}} 
\date{}
\maketitle

\begin{abstract}
The Minkowski weighted $k$-means ($mwk$-means) algorithm extends classical $k$-means by incorporating feature weights and a Minkowski distance. We first show that the $mwk$-means objective can be expressed as a power-mean aggregation of within-cluster dispersions, with the order determined by the Minkowski exponent $p$. This formulation reveals how $p$ controls the transition between selective and uniform use of features. Using this representation, we derive bounds for the objective function and characterise the structure of the feature weights, showing that they depend only on relative dispersion and follow a power-law relationship with dispersion ratios. This leads to explicit guarantees on the suppression of high-dispersion features, and we establish convergence of the algorithm. Building on these theoretical results, we introduce Cluster-Adaptive Feature Extraction (CAFE), a method that uses the $mwk$-means feature weights to rescale the data prior to unsupervised feature extraction. We prove that this rescaling reverses the within-cluster dispersion ordering, suppressing noisy features and amplifying informative ones. Numerous experiments conducted under controlled within-cluster noise show that CAFE consistently improves the results of traditional feature extraction methods.
\end{abstract}
\vspace{1em}
\noindent \textbf{Keywords:} clustering, feature extraction, Minkowski distance.

\section{Introduction}
Clustering is a fundamental task in unsupervised learning. It aims to partition a set of data points into groups (i.e., clusters) in such a way that points within the same group are similar and points in different groups are dissimilar. Clustering enables researchers to uncover patterns without requiring labelled samples (which may be expensive to acquire, or simply unavailable). Hence, it often forms a critical first step in exploratory data analysis and knowledge discovery pipelines. Clustering algorithms have been successfully applied across numerous research areas, including bioinformatics, image processing, social network analysis, and natural language processing \cite{lin2022clustering,lopez2025biclustering,pimenta2024androidgyny,ezugwu2022comprehensive,de2021identifying,huang2024deep}. 

The clustering literature presents many algorithms, among which $k$-means \cite{macqueen1967some} is arguably the most popular \cite{ikotun2023k,liu2023transforming,ahmed2020k,harris2022extensive}. Its popularity stems from its conceptual simplicity, computational efficiency, and strong performance across a wide range of practical applications. However, $k$-means has well-known limitations. In particular, it assumes that all features (i.e., the variables describing each data point) are equally important to the clustering process. This assumption is rarely satisfied in real-world applications, where even relevant features may exhibit substantially different degrees of relevance.

To address this limitation, several extensions of $k$-means have been proposed that incorporate feature weighting mechanisms \cite{lai2025silhouette,hancer2020survey,li2020fuzzy,oskouei2024feature}. These methods aim to automatically assign higher weights to more informative features, thereby improving clustering performance in data sets where feature relevance varies. One notable example is the Minkowski weighted $k$-means ($mwk$-means) algorithm \cite{de2012minkowski}, which generalises classical $k$-means by employing a weighted Minkowski distance and iterative weight updates to jointly learn cluster assignments and feature importances.

The $mwk$-means algorithm has demonstrated popularity and empirical success in various scenarios \cite{NINOADAN2021115424,aradnia2022adaptive,deng2016survey,melvin2016uncovering,melvin2017mutsalpha,jamali2020combination,gowthaman2025novel}, including its use as a tool to improve the estimation of the number of clusters in a data set \cite{de2015recovering}. However, it has received limited attention from a theoretical standpoint, leaving a gap in our understanding of its fundamental properties. Several theoretical questions remain insufficiently explored, including the convergence behaviour of the algorithm, the role of the Minkowski exponent in shaping the objective, and the structure of the resulting feature weights. A formal analysis of these aspects is crucial not only for establishing the method's theoretical soundness but also for guiding its application and further development.

This paper aims to bridge this gap by providing a rigorous theoretical study of $mwk$-means and introducing a new method that builds directly on its theoretical properties. We show that the $mwk$-means objective function can be expressed as a power-mean aggregation of within-cluster dispersions, thereby providing a unifying interpretation of the role of the Minkowski exponent. Building on this formulation, we analyse the properties of the objective and the induced feature weighting mechanism, establish convergence guarantees, and introduce Cluster-Adaptive Feature Extraction (CAFE), a method that exploits the $mwk$-means feature weights to improve unsupervised feature extraction in the presence of within-cluster noise.

In particular, our main contributions are as follows: (i) a reformulation of the $mwk$-means objective as a power-mean aggregation of within-cluster dispersions; (ii) theoretical bounds for the objective derived from power-mean inequalities; (iii) a characterisation of the structure and scaling behaviour of the feature weights; (iv) convergence guarantees for the algorithm; and (v) CAFE, a cluster-adaptive feature extraction method grounded in the theoretical properties of $mwk$-means, and empirical validation.

\section{Related work}

\subsection{Minkowski weighted $k$-means}

The $k$-means algorithm is a classical clustering method that aims to partition a data set \(X = \{x_1, \ldots, x_n\}\) into a clustering \(S = \{S_1, \ldots, S_k\}\) such that \(X = \bigcup_{l=1}^k S_l\), and \(S_l \cap S_t = \emptyset \) for all \(l \neq t\). The algorithm minimises the within-cluster sum of squared distances,
\begin{equation}
\label{eq:kmeans}
W(S,Z) = \sum_{l=1}^k \sum_{x_i \in S_l} \sum_{v=1}^m (x_{iv} - z_{lv})^2,
\end{equation}
where each \(x_i \in X\) is described over \(m\) features, and \(z_l \in Z\) is the centroid of cluster \(S_l\). The algorithm minimises~\eqref{eq:kmeans} iteratively via the following steps:
\begin{enumerate}
    \item Select \(k\) data points from \(X\) at random and set them as the initial centroids \(Z=\{z_1, \ldots, z_k\}\).
    \item Assign each \(x_i \in X\) to the cluster \(S_l\) whose centroid \(z_l\) is closest to \(x_i\).
    \item Update each centroid \(z_l\) as the component-wise mean over \(x_i \in S_l\).
    \item Repeat Steps (2) and (3) until the centroids no longer change.
\end{enumerate}

This iterative procedure is guaranteed to converge in a finite number of steps, as each iteration monotonically decreases the objective function~(\ref{eq:kmeans}). However, the algorithm is only guaranteed to find a local minimum, and its performance is sensitive to the initialisation of the centroids. The $k$-means algorithm implicitly assumes that all features are equally important, that clusters are roughly spherical and of similar size, and that the Euclidean distance is an appropriate measure of dissimilarity. These assumptions may not always hold, motivating various extensions such as feature-weighted and generalised distance clustering algorithms \cite{lai2025silhouette,hancer2020survey,li2020fuzzy,oskouei2024feature}.

The Minkowski weighted $k$-means ($mwk$-means) \cite{de2012minkowski} is a popular feature-weighted clustering algorithm \cite{NINOADAN2021115424,aradnia2022adaptive,aradnia2022adaptive,deng2016survey,melvin2016uncovering,melvin2017mutsalpha,jamali2020combination,gowthaman2025novel}, and has also been shown to improve the estimation of the correct number of clusters in a data set~\cite{de2015recovering}. This algorithm employs a weighted Minkowski distance,
\begin{equation}
    \label{eq:weighted_mink_dist}
    d(x_i, z_l) = \sum_{v=1}^m w_{lv}^p |x_{iv}-z_{lv}|^p,
\end{equation}
where \(p>1\) is the Minkowski exponent, and \(w_{lv}\) is the weight of feature \(v\) in cluster \(S_l\). In particular, $mwk$-means allows features to have different degrees of relevance across clusters, reflecting the intuition that feature importance may vary locally within the data. The objective function of the algorithm is given by

\begin{equation}
    \label{eq:mwk}
    W_p(S,Z,w) = \sum_{l=1}^k \sum_{x_i \in S_l} \sum_{v=1}^m w_{lv}^p |x_{iv} - z_{lv}|^p,
\end{equation}
which, when minimised with respect to the weights, leads to
\begin{equation}
    \label{eq:mwk_weight}
    w_{lv}= \frac{1}{\sum_{u=1}^m \left[\frac{D_{lv}}{D_{lu}}\right]^{\frac{1}{p-1}}},
\end{equation}
where \(D_{lv} = \sum_{x_i \in S_l} |x_{iv} - z_{lv}|^p\), and \(\sum_{v=1}^m w_{lv} = 1\) for each cluster \(S_l\). Algorithm~\ref{alg:mwkmeans} formally describes the iterative steps used to minimise~\eqref{eq:mwk}. An important observation is that the weight update rule in $mwk$-means can be viewed as a generalisation of feature selection. Traditional feature selection methods assign binary weights to features, effectively including or excluding them from the clustering process (i.e., weights of zero or one) \cite{li2017feature}. In contrast, feature-weighted clustering methods allow for varying degrees of importance across features, enabling the model to account for different levels of relevance even among informative features \cite{shang2023unsupervised}.

The Minkowski exponent \(p > 1\) plays a central role in shaping the distance metric and the resulting clustering geometry. When \(p = 2\), the Minkowski distance reduces to the squared Euclidean distance, and in the context of $mwk$-means this yields a feature-weighted variant of the classical \(k\)-means algorithm. For values of \(p \ne 2\), the geometry induced by the distance changes, leading to different cluster boundaries, and the algorithm generalises beyond the Euclidean setting. This flexibility allows $mwk$-means to adapt to a wider range of data distributions while also learning feature relevance.

Each centroid \(z_l\) in $mwk$-means is defined as the Minkowski centre of the points \(x_i \in S_l\). Specifically, for each feature \(v\), the coordinate \(z_{lv}\) minimises the function
\begin{equation}
    \label{eq:mink_centre}
    f_p(z) = \sum_{x_i \in S_l} |x_{iv} - z|^p.    
\end{equation}
This function has closed-form solutions for certain values of \(p\). When \(p = 2\), the minimum is achieved at the mean. As \(p \to \infty\), the solution approaches the midrange, i.e., the midpoint between the minimum and maximum values. For general \(p \in (1, \infty)\), however, there is no closed-form solution, and the minimiser must be computed numerically, typically via iterative methods such as gradient-based optimisation. The use of the Minkowski centre allows $mwk$-means to adapt to different distance geometries while jointly learning feature relevance. The formulation above provides the basis for the theoretical analysis developed in the next section.

\begin{algorithm}
    \caption{Minkowski weighted $k$-means ($mwk$-means)}
    \label{alg:mwkmeans}
    \begin{algorithmic}[1]
        \Require Data set $X$, number of clusters $k$, exponent $p$.
        \Ensure Clustering $S = \{S_1, \dots, S_k\}$, centroids, and weights.
        \State Set \(w_{lv}=\frac{1}{m}\) for \(l=1, \ldots, k\) and \(v=1, \ldots, m\).
        \State Select $k$ data points from $X$ uniformly at random, and copy their values into $z_1, \ldots, z_k$.
        \Repeat
            \State Assign each $x_i \in X$ to the cluster of its nearest centroid according to (\ref{eq:weighted_mink_dist}). That is,
            \[
            S_l \gets \{x_i \in X \mid l=\arg\min_{t} d(x_i, z_t)\}.
            \]
            \State Update each $z_{l} \in Z$ to the component-wise Minkowski centre of $x_i \in S_l$.
            \State Update each \(w_{lv}\) using (\ref{eq:mwk_weight}).
        \Until{the objective (\ref{eq:mwk}) converges.}
        \State \Return Clustering $C$ and centroids $Z$.
    \end{algorithmic}
\end{algorithm}

\subsection{Unsupervised feature extraction}

Feature extraction aims to map a high-dimensional data set $X \in \mathbb{R}^{n \times m}$ to a lower-dimensional representation $Y \in \mathbb{R}^{n \times r}$, with $r \ll m$, that retains the most relevant structure of the original data. This is particularly important in clustering, where the presence of irrelevant or noisy features can obscure cluster structure and degrade the performance of algorithms such as $k$-means. By reducing the dimensionality of the data prior to clustering, feature extraction methods can improve both the quality of the resulting partition and the computational efficiency of the clustering process~\cite{van2009dimensionality}.

Principal Component Analysis (PCA) is arguably the most widely used feature extraction method. It finds a set of $r$ orthogonal directions, known as principal components, that maximise the variance of the projected data. Formally, the $j$-th principal component is the eigenvector corresponding to the $j$-th largest eigenvalue of the covariance matrix of $X$. PCA is computationally efficient and well understood theoretically, and its connection to $k$-means has been established by Ding and He~\cite{ding2004k}, who showed that the cluster membership indicators of $k$-means lie in the subspace spanned by the leading principal components.

Non-negative Matrix Factorisation (NMF)~\cite{seung2001algorithms} seeks a decomposition $X \approx FG$, where $F \in \mathbb{R}^{n \times r}$ and 
$G \in \mathbb{R}^{r \times m}$ are both constrained to have non-negative entries. The non-negativity constraint encourages a parts-based representation of the data, which has been shown to be beneficial for clustering tasks where the data are naturally non-negative, such as text or image data~\cite{wang2012nonnegative}. In practice, NMF is sensitive to initialisation, and multiple restarts are typically used to mitigate the effect of local minima.

Independent Component Analysis (ICA)~\cite{le2011ica} seeks a linear transformation of the data such that the resulting components
are as statistically independent as possible. Unlike PCA, which finds uncorrelated components, ICA imposes the stronger condition of statistical independence, making it better suited to separating mixed signals with non-Gaussian distributions~\cite{le2011ica}. ICA is widely used in signal processing and has been applied to clustering as a preprocessing step.

Autoencoders~\cite{hinton2006reducing} are neural network models that learn a compact representation of the data through a bottleneck architecture. An encoder maps the input to a lower-dimensional latent space, and a decoder attempts to reconstruct the original input from this representation. The latent representation is trained to minimise the reconstruction error, encouraging it to capture the most informative aspects of the data. In the context of clustering, the latent representation can be used as a feature space for downstream methods such as $k$-means.

Uniform Manifold Approximation and Projection (UMAP)~\cite{mcinnes2018umap} is a non-linear dimensionality reduction method grounded in Riemannian geometry and algebraic topology. It constructs a fuzzy topological representation of the data in the high-dimensional space and seeks a low-dimensional embedding that preserves this structure. UMAP is particularly effective at preserving both local and global structure in the data, and has demonstrated strong empirical performance across a wide range of applications.

\section{Theoretical analysis}
We now provide a theoretical analysis of the $mwk$-means objective. Building on the formulation introduced in Section~2, we examine its convergence properties, derive an equivalent representation in terms of within-cluster dispersions, and analyse the structure of the induced feature weights. This analysis provides insight into the role of the Minkowski exponent and its effect on the behaviour of the algorithm.

\subsection{Convergence Properties}
In this section, we establish the convergence properties of the $mwk$-means algorithm. As with classical $k$-means, the algorithm proceeds by alternating optimisation over cluster assignments, centroids, and feature weights. We show that each of these updates is well-defined and leads to a monotonic decrease of the objective function, which in turn guarantees convergence.

A key component of the algorithm is the update of cluster centroids, which are defined as Minkowski centres. The following result ensures that this update is well-posed.

\begin{proposition}
\label{prop:mink_centre_convex}
If \(p>1\), then \(f_p(z)\) admits a unique Minkowski centre.
\end{proposition}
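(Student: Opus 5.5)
The plan is to prove existence and uniqueness separately, treating $f_p(z)=\sum_{x_i\in S_l}|x_{iv}-z|^p$ as a function of the single real variable $z$ for a fixed feature $v$ and a nonempty cluster $S_l$. Uniqueness will come from strict convexity and existence from continuity plus coercivity. The component-wise nature of the Minkowski centre means the one-dimensional statement suffices for the whole centroid $z_l$.

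\emph{Existence.} Each term $|x_{iv}-z|^p$ is continuous in $z$, so $f_p$ is continuous on $\mathbb{R}$. Let $a=\min_i x_{iv}$ and $b=\max_i x_{iv}$. For $z<a$, every term is strictly decreasing in $z$. For $z>b$, every term is strictly increasing. Hence $f_p(z)>f_p(a)$ for $z<a$, and $f_p(z)>f_p(b)$ for $z>b$. So the infimum over $\mathbb{R}$ equals the infimum over the compact interval $[a,b]$, which is attained by the extreme value theorem. As a side remark, this also shows the minimiser lies in $[a,b]$.

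\emph{Uniqueness.} I will show that $g(t)=|t|^p$ is strictly convex on $\mathbb{R}$ for $p>1$.

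On $t>0$ and on $t<0$, $g$ is twice differentiable with $g''(t)=p(p-1)|t|^{p-2}>0$. Across $0$, $g$ is differentiable with derivative $g'(t)=p\,|t|^{p-1}\operatorname{sign}(t)$. This derivative is continuous, and it is strictly increasing on $\mathbb{R}$, because $t\mapsto |t|^{p-1}\operatorname{sign}(t)$ is strictly increasing when $p-1>0$. A differentiable function with strictly increasing derivative is strictly convex.

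Since $z\mapsto x_{iv}-z$ is affine, each $|x_{iv}-z|^p$ is strictly convex in $z$. A sum of strictly convex functions is strictly convex, provided there is at least one summand, i.e.\ $S_l\neq\emptyset$. Now suppose $z_1\neq z_2$ were both minimisers with common value $f^\ast$. Strict convexity gives $f_p\big(\tfrac{z_1+z_2}{2}\big)<f^\ast$, which is a contradiction.

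Alternatively, $f_p'(z)=-p\sum_i |x_{iv}-z|^{p-1}\operatorname{sign}(x_{iv}-z)$ is continuous and strictly increasing. It is negative at $a$ and positive at $b$ (unless $a=b$, in which case $z=a$ is trivially the unique minimiser). By the intermediate value theorem it has exactly one zero, which is the minimiser.

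\emph{Main obstacle.} The main point requiring care is strict convexity at the nondifferentiability point of the second derivative. For $1<p<2$, $g''$ blows up at $t=0$, so the simple "$g''>0$" argument fails there. I handle this through the strict monotonicity of $g'$ instead, as above. The remaining conditions are a nonempty cluster (assumed, as in the algorithm) and $p>1$; the statement fails at $p=1$, where the median need not be unique.
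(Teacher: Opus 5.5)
Your proof is correct and follows the same route as the paper. Strict convexity of $z\mapsto\sum_i|x_{iv}-z|^p$ gives at most one minimiser, and existence comes from growth at infinity (the paper invokes coercivity, you restrict to $[a,b]$ and apply the extreme value theorem), with your version additionally proving the strict convexity of $|t|^p$ via its strictly increasing derivative and making explicit the nonempty-cluster assumption the paper leaves implicit.
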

\begin{proof}
The function \(t \mapsto |t|^p\) is strictly convex on \(\mathbb{R}\) for \(p > 1\). Hence, for each fixed \(x_{iv}\), the function
\(
z \mapsto |x_{iv}-z|^p
\)
is strictly convex in \(z\), being the composition of a strictly convex function with an affine map. Therefore, \(f_p(z)\), as a finite sum of strictly convex functions, is itself strictly convex. A strictly convex function has at most one minimiser. Moreover, \(f_p(z)\to\infty\) as \(|z|\to\infty\), so \(f_p(z)\) is coercive. Thus, it attains its minimum, which is unique.
\end{proof}

Proposition~\ref{prop:mink_centre_convex} guarantees that, for $p>1$, each cluster admits a unique Minkowski centre. This property is essential, as it ensures that the centroid update step is unambiguous and that the objective is minimised with respect to $Z$ at each iteration. We can now establish the convergence of the overall algorithm.

\begin{theorem}
For any fixed \( p > 1 \), $mwk$-means monotonically decreases its objective function at each iteration and converges in a finite number of steps.
\end{theorem}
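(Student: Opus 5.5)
We follow the standard block-coordinate descent argument for $k$-means, adapted to the three blocks $S$, $Z$ and $w$. Write one iteration of Algorithm~\ref{alg:mwkmeans} as the sequence $(S^{(t)},Z^{(t)},w^{(t)}) \to (S^{(t+1)},Z^{(t)},w^{(t)}) \to (S^{(t+1)},Z^{(t+1)},w^{(t)}) \to (S^{(t+1)},Z^{(t+1)},w^{(t+1)})$. We show that $W_p$ in~\eqref{eq:mwk} does not increase along each arrow.

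\textbf{Assignment step.} With $Z$ and $w$ fixed, $W_p$ separates into $\sum_i d(x_i,z_{l(i)})$. Moving each $x_i$ to the cluster $l$ minimising~\eqref{eq:weighted_mink_dist} therefore cannot increase any summand. To make ``no change'' well defined, ties are broken in favour of the current cluster.

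\textbf{Centroid step.} With $S$ and $w$ fixed, $W_p=\sum_l\sum_v w_{lv}^p f_p(z_{lv})$, where $f_p$ is as in~\eqref{eq:mink_centre}, and each coordinate appears in exactly one term. Since $w_{lv}^p\ge 0$, replacing $z_{lv}$ by the unique minimiser of $f_p$ given by Proposition~\ref{prop:mink_centre_convex} cannot increase $W_p$.

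\textbf{Weight step.} With $S$ and $Z$ fixed, $W_p=\sum_l\sum_v w_{lv}^p D_{lv}$, to be minimised over each simplex $\sum_v w_{lv}=1$, $w_{lv}\ge 0$. For $p>1$ the map $w\mapsto w^pD_{lv}$ is convex, and strictly convex when $D_{lv}>0$. A Lagrangian calculation gives $p w_{lv}^{p-1}D_{lv}=\lambda$, so $w_{lv}\propto D_{lv}^{-1/(p-1)}$. Normalising yields exactly~\eqref{eq:mwk_weight}, and by convexity this stationary point is the global minimiser. Hence the weight update also cannot increase $W_p$.

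Combining the three steps, $W_p$ is monotonically non-increasing and bounded below by $0$, so it converges.

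\textbf{Finite termination.} This is the main obstacle. For a given partition $S$, the centroid and weight steps are deterministic functions of $S$: uniqueness of the centre comes from Proposition~\ref{prop:mink_centre_convex}, and uniqueness of the weights from~\eqref{eq:mwk_weight}. The algorithm's state after each iteration is therefore determined by the current partition. There are at most $k^n$ partitions, so the sequence of objective values takes finitely many values.

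Under tie-breaking that keeps the current assignment, any iteration in which the partition changes must strictly decrease the assignment-step cost. This is a strict decrease of $W_p$, so no partition can recur. Hence after finitely many iterations the partition stops changing, $Z$ and $w$ are then fixed, and the stopping criterion is met.

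\textbf{Degenerate case.} A technical point is the case $D_{lv}=0$, where~\eqref{eq:mwk_weight} is undefined. We handle it by the usual convention that weight mass is distributed among the zero-dispersion features, giving objective $0$ for that cluster. Alternatively, we assume a small positive constant is added to $D_{lv}$, which keeps the minimiser unique. We also note empty clusters: either their previous centroid is kept, or we assume they do not occur. Both conventions preserve monotonicity.
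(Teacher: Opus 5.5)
Your proposal is correct and follows the same block-coordinate-descent argument as the paper: each of the assignment, centroid and weight updates minimises $W_p$ over its own block, so the objective is non-increasing, and the finite number of partitions gives termination. Your extra details make explicit what the paper's shorter proof leaves implicit. These are the Lagrangian derivation of~\eqref{eq:mwk_weight}, the tie-breaking rule, the observation that $(Z,w)$ is a deterministic function of $S$ so that no partition can recur, and the handling of $D_{lv}=0$ and empty clusters.
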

\begin{proof}
The $mwk$-means algorithm minimises~\eqref{eq:mwk} by alternating updates over cluster assignments, centroids, and weights. Reassigning points to their nearest centroid minimises~\eqref{eq:mwk} with respect to $S$. Updating each centroid to the Minkowski centre of its cluster minimises~\eqref{eq:mwk} with respect to $Z$ (which is unique by Proposition~\ref{prop:mink_centre_convex}). Updating weights via~\eqref{eq:mwk_weight} minimises~\eqref{eq:mwk} with respect to $w$ under the normalisation constraint on the weights. Hence, $W_p^{(t+1)} \leq W_p^{(t)}$ at every iteration $t$.

Since the number of possible clusterings is finite, and $W_p^{(t+1)} = W_p^{(t)}$ implies convergence, the algorithm terminates after a finite number of steps.
\end{proof}
The above theorem shows that $mwk$-means inherits the fundamental convergence properties of classical $k$-means. In particular, each iteration decreases the objective function, and since the number of possible clusterings is finite and the objective is lower bounded, the algorithm must terminate after a finite number of steps. As in the standard $k$-means setting, the algorithm converges to a local minimum of the objective, with the final solution depending on the initialisation.

\subsection{Bounds for the objective}
In this section, we derive bounds for the $mwk$-means objective function~\eqref{eq:mwk}. Our strategy is to reformulate the objective in terms of within-cluster dispersions, which enables a direct analytical characterisation of its behaviour. This representation reveals that the objective depends solely on the dispersion structure of each cluster, rather than explicitly on the feature weights.

\begin{lemma}
    \label{lemma:mwkmeans_in_terms_of_D}
    The $mwk$-means objective function~\eqref{eq:mwk} can be expressed in terms of within-cluster dispersions as \[ W_p(S,Z)=\sum_{l=1}^k \frac{1}{\left(  \sum_{v=1}^m \frac{1}{D_{lv}^{\frac{1}{p-1}}}\right)^{p-1}}.\]
\end{lemma}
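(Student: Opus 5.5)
The plan is to substitute the optimal weights~\eqref{eq:mwk_weight} into the objective~\eqref{eq:mwk} and simplify cluster by cluster. First, exchange the order of summation in~\eqref{eq:mwk}: since $w_{lv}$ does not depend on $i$,
\[
W_p(S,Z,w)=\sum_{l=1}^k\sum_{v=1}^m w_{lv}^p \sum_{x_i\in S_l}|x_{iv}-z_{lv}|^p=\sum_{l=1}^k\sum_{v=1}^m w_{lv}^p D_{lv}.
\]
So it suffices to show that, for each fixed $l$, the inner sum $\sum_{v} w_{lv}^p D_{lv}$ equals $\left(\sum_{v} D_{lv}^{-1/(p-1)}\right)^{-(p-1)}$ when $w_{lv}$ is given by~\eqref{eq:mwk_weight}. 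In this sense $W_p(S,Z)$ denotes the objective already minimised over $w$.

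Second, rewrite the weight in a cleaner form. Set $q=\frac{1}{p-1}$ and $A_l=\sum_{u=1}^m D_{lu}^{-q}$. Then
\[
\sum_u (D_{lv}/D_{lu})^{q}=D_{lv}^{q}A_l, \qquad\text{so}\qquad w_{lv}=\frac{D_{lv}^{-q}}{A_l}.
\]
This immediately confirms $\sum_v w_{lv}=1$.

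Third, compute $w_{lv}^pD_{lv}=D_{lv}^{1-pq}A_l^{-p}$. The exponent satisfies $1-pq=1-\frac{p}{p-1}=-\frac{1}{p-1}=-q$. Summing over $v$ therefore gives
\[
\sum_v w_{lv}^pD_{lv}=A_l^{-p}\sum_v D_{lv}^{-q}=A_l^{1-p}=\frac{1}{A_l^{p-1}},
\]
which is exactly the $l$-th term claimed in Lemma~\ref{lemma:mwkmeans_in_terms_of_D}. Summing over $l$ completes the identity.

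The algebra is routine; the only real obstacle is well-posedness when some $D_{lv}=0$, for instance a feature that is constant within a cluster. In that case~\eqref{eq:mwk_weight} and the expression in the lemma involve division by zero. I would first assume all $D_{lv}>0$, as is implicit in~\eqref{eq:mwk_weight}. Then I would remark that the formula extends by continuity with the convention $1/0=\infty$: the $l$-th term becomes $0$, consistent with putting all weight on the zero-dispersion features and attaining objective value $0$ for that cluster.
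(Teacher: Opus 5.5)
Your proposal is correct and follows essentially the same route as the paper: exchange the sums to get $\sum_{l}\sum_{v} w_{lv}^p D_{lv}$, rewrite the weight as $D_{lv}^{-1/(p-1)}/\sum_{u} D_{lu}^{-1/(p-1)}$, and use $1-\frac{p}{p-1}=-\frac{1}{p-1}$ to collapse each cluster's inner sum to $\left(\sum_{u} D_{lu}^{-1/(p-1)}\right)^{1-p}$. Your remark on the degenerate case $D_{lv}=0$ is a sensible addition that the paper leaves implicit here; it only imposes $D_{lv}>0$ explicitly from Lemma~\ref{lemma:power_mean} onward.
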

\begin{proof}
    Recall that \(D_{lv}\) is the dispersion of feature \(v\) at cluster \(S_l\). That is, \(D_{lv} = \sum_{x_i \in S_l} |x_{iv} - z_{lv}|^p\). Then,
    \begin{align*}
    W_p(S,Z,w) = \sum_{l=1}^k \sum_{x_i \in S_l} \sum_{v=1}^m w_{lv}^p |x_{iv} - z_{lv}|^p=\sum_{l=1}^k \sum_{v=1}^m w_{lv}^p \sum_{x_i \in S_l} |x_{iv} - z_{lv}|^p = \sum_{l=1}^k \sum_{v=1}^m w_{lv}^p D_{lv}.
    \end{align*}
    Substituting~\eqref{eq:mwk_weight} into the above leads to
    \begin{align*}
    W_p(S,Z,w)  &=\sum_{l=1}^k \sum_{v=1}^m \left(\frac{1}{\sum_{u=1}^m \left[\frac{D_{lv}}{D_{lu}}\right]^{\frac{1}{p-1}}} \right)^p D_{lv}=\sum_{l=1}^k \sum_{v=1}^m \frac{D_{lv}}{D_{lv}^{\frac{p}{p-1}}\left(\sum_{u=1}^m \left[\frac{1}{D_{lu}}\right]^{\frac{1}{p-1}}\right)^p}\\
    &=\sum_{l=1}^k \sum_{v=1}^m \frac{D_{lv}^{1-\frac{p}{p-1}}}{\left(\sum_{u=1}^m \left[\frac{1}{D_{lu}}\right]^{\frac{1}{p-1}}\right)^p}=\sum_{l=1}^k \frac{1}{\left(\sum_{u=1}^m D_{lu}^{-\frac{1}{p-1}}\right)^p} \sum_{v=1}^m D_{lv}^{-\frac{1}{p-1}}.\\
    \end{align*}
    Clearly, \(\sum_{u=1}^m D_{lu}^{-\frac{1}{p-1}} = \sum_{v=1}^m D_{lv}^{-\frac{1}{p-1}}\). Hence,
    \begin{align*}
        \sum_{l=1}^k \frac{1}{\left(\sum_{u=1}^m D_{lu}^{-\frac{1}{p-1}}\right)^p} \sum_{v=1}^m D_{lv}^{-\frac{1}{p-1}} = \sum_{l=1}^k \frac{1}{\left(  \sum_{v=1}^m \left[\frac{1}{D_{lv}}\right]^{\frac{1}{p-1}}\right)^{p-1}}.
    \end{align*}    
\end{proof}

The above result provides a compact expression of the objective entirely in terms of the dispersions $\{D_{lv}\}$. This form is particularly convenient, as it separates the contribution of each cluster and removes the explicit dependence on the weights. As a consequence, the behaviour of $mwk$-means can be analysed through properties of the dispersion values alone.

To further interpret this expression, we now relate it to the family of power means. This connection allows us to leverage well-known inequalities and ordering properties of means to establish bounds for the objective.

\begin{dfn}
Let $r \in \mathbb{R}$ and $D_{l1},\dots,D_{lm} > 0$. The power mean of order $r$ is defined as
\[
M_r(D_{l1},\dots,D_{lm})=\left(\frac{1}{m}\sum_{v=1}^m D_{lv}^r\right)^{1/r}.
\]
\end{dfn}

The following lemma shows that the $mwk$-means objective is proportional to a sum of power means of the within-cluster dispersions, with the order of the mean determined by the Minkowski exponent $p$.

\begin{lemma}
\label{lemma:power_mean}
Let each $D_{lv} > 0$. For any $p > 1$, the $mwk$-means objective function 
satisfies
\[
W_p(S,Z) = \frac{1}{m^{p-1}} \sum_{l=1}^k M_r(D_{l1}, \dots, D_{lm}),
\]
where $r = -1/(p-1)$.
\end{lemma}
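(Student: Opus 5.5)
The plan is to start from Lemma~\ref{lemma:mwkmeans_in_terms_of_D}, which already writes the objective as a sum over clusters of
\[
\left(\sum_{v=1}^m D_{lv}^{-\frac{1}{p-1}}\right)^{-(p-1)},
\]
and then recognise each cluster's summand as a rescaled power mean. Set $r=-1/(p-1)$. Since $p>1$, $r$ is a well-defined negative real number, and $1/r=-(p-1)$. Because every $D_{lv}>0$, all the powers $D_{lv}^r$ are finite and positive, so the definition of $M_r$ applies without qualification.

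Fix a cluster $l$ and write $\Sigma_l=\sum_{v=1}^m D_{lv}^r$. The summand in Lemma~\ref{lemma:mwkmeans_in_terms_of_D} is then $\Sigma_l^{1/r}$. By the definition of the power mean,
\[
M_r(D_{l1},\dots,D_{lm})=\left(\frac{1}{m}\Sigma_l\right)^{1/r}=m^{-1/r}\,\Sigma_l^{1/r}=m^{p-1}\,\Sigma_l^{1/r}.
\]
The middle step uses $(ab)^{1/r}=a^{1/r}b^{1/r}$, which is valid here because both factors are positive. Rearranging gives $\Sigma_l^{1/r}=m^{-(p-1)}M_r(D_{l1},\dots,D_{lm})$. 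Substituting this into each term of Lemma~\ref{lemma:mwkmeans_in_terms_of_D} and pulling the constant $m^{-(p-1)}$ out of the sum over $l$ gives exactly the claimed identity.

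The only real care needed is with the sign of the exponent. Because $1/r=-(p-1)$, the factor $(1/m)^{1/r}$ equals $m^{p-1}$ and not $m^{-(p-1)}$, so it is easy to get the power of $m$ inverted. I will check this on a sanity case: take $m=1$ and $D_{l1}=D$. Then the summand is $D$, the power mean is $D$, and the prefactor is $1$, so the two sides agree.

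The positivity hypothesis $D_{lv}>0$ is also what makes Lemma~\ref{lemma:mwkmeans_in_terms_of_D} itself meaningful, since it rules out zero denominators in the weights~\eqref{eq:mwk_weight}. No other obstacle is expected.
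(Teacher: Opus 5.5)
Your proof is correct and is essentially the paper's argument. Both substitute the definition of $M_r$ into the expression from Lemma~\ref{lemma:mwkmeans_in_terms_of_D} and use $r(p-1)=-1$ to extract the factor $m^{-(p-1)}$: the paper replaces $\sum_v D_{lv}^r$ by $m M_{rl}^r$ in the denominator, whereas you solve $M_r = m^{p-1}\Sigma_l^{1/r}$ for $\Sigma_l^{1/r}$, which is the same computation in a different order.
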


\begin{proof}
Let $M_{rl} := M_r(D_{l1},\dots,D_{lm})$. By the definition of the power mean,
\[
    mM_{rl}^r=\sum_{v=1}^m D_{lv}^r.
\]
Hence, Lemma~\ref{lemma:mwkmeans_in_terms_of_D} gives
\begin{align*}
W_p(S,Z)&=\sum_{l=1}^k \frac{1}{\left(\sum_{v=1}^m D_{lv}^{-\frac{1}{p-1}}\right)^{p-1}} = \sum_{l=1}^k \frac{1}{\left(\sum_{v=1}^m D_{lv}^r\right)^{p-1}}\\
&=\sum_{l=1}^k \frac{1}{\left(m\,M_{rl}^r\right)^{p-1}} = \sum_{l=1}^k\frac{1}{m^{p-1} M_{rl}^{r(p-1)}}.
\end{align*}
Since $r(p-1)=-1$, this simplifies to
\[
W_p(S,Z)=\frac{1}{m^{p-1}}\sum_{l=1}^k M_{rl}.
\]
\end{proof}

This representation provides an important insight: the role of $p$ is to control how dispersion values are aggregated across features. In particular, since $r=-1/(p-1)<0$, the objective emphasises smaller dispersion values more strongly than larger ones. As $p$ varies, the aggregation transitions between different regimes of sensitivity to feature-wise dispersion.

An immediate consequence of this formulation is that minimising the $mwk$-means objective is equivalent to minimising a sum of power means of dispersions.

\begin{cor}
Let $p>1$ be fixed. Minimising the $mwk$-means objective $W_p(S,Z)$
is equivalent to minimising
\[
\sum_{l=1}^k M_r(D_{l1},\dots,D_{lm}),
\]
where $r=-1/(p-1)$.
\end{cor}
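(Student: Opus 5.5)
The plan is to read the corollary directly off Lemma~\ref{lemma:power_mean}, which gives, for every admissible pair $(S,Z)$ with all $D_{lv}>0$,
\[
W_p(S,Z)=\frac{1}{m^{p-1}}\sum_{l=1}^k M_r(D_{l1},\dots,D_{lm}),\qquad r=-\frac{1}{p-1}.
\]
Here $W_p(S,Z)$ denotes the objective after the weights have been set optimally via~\eqref{eq:mwk_weight}. The factor $c:=m^{-(p-1)}$ depends only on $m$ and the fixed $p$, not on $S$ or $Z$, and it is strictly positive. Writing $G(S,Z):=\sum_{l=1}^k M_r(D_{l1},\dots,D_{lm})$, we therefore have $W_p=c\,G$ identically on the domain of $(S,Z)$.

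The key step is the elementary fact that multiplying a function by a positive constant does not change its set of minimisers, local or global. Indeed, for any $(S,Z)$ and $(S',Z')$,
\[
W_p(S,Z)\le W_p(S',Z') \iff G(S,Z)\le G(S',Z'),
\]
because $c>0$. Hence $\arg\min W_p=\arg\min G$ and $\min W_p=c\min G$. The same order-preservation holds inside any neighbourhood, so local minimisers also coincide. This also means the alternating scheme of Algorithm~\ref{alg:mwkmeans}, viewed through this reformulation, is performing descent on $G$ as well.

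The only point needing care is the domain. Lemma~\ref{lemma:power_mean} assumes every $D_{lv}>0$, and the reduction of $W_p(S,Z,w)$ to $W_p(S,Z)$ requires the weights to be at their minimiser~\eqref{eq:mwk_weight}, which is the sense in which Lemma~\ref{lemma:mwkmeans_in_terms_of_D} is stated. I would therefore state the equivalence over configurations with positive dispersions, where minimising over $w$ first is justified by the weight-optimality already used in the convergence theorem: $\min_{S,Z,w}W_p(S,Z,w)=\min_{S,Z}W_p(S,Z)$. Degenerate cases with some $D_{lv}=0$ can either be excluded as in the lemma, or handled by continuity: $M_r\to 0$ as any argument tends to $0$ when $r<0$, consistent with $W_p=0$ for that cluster. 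Beyond this bookkeeping there is no real obstacle.
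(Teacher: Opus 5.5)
Your proposal is correct and follows the paper's route: the paper offers no separate argument, presenting the corollary as an immediate consequence of Lemma~\ref{lemma:power_mean}. The point is exactly yours, namely that the factor $m^{-(p-1)}$ is a positive constant independent of $(S,Z)$, so the two objectives share the same minimisers. Your remarks on requiring $D_{lv}>0$ and on the reduction $\min_{S,Z,w}W_p=\min_{S,Z}W_p(S,Z)$ are sound bookkeeping that the paper leaves implicit.
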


We are now in a position to derive explicit bounds for the objective. These follow directly from classical inequalities relating power means of different orders.

\begin{theorem}
\label{thm:mwk_bounds}
Let $D_{lv} > 0$ for all $l$ and $v$, and let $p>1$. Then the $mwk$-means 
objective satisfies
\[
\frac{1}{m^{p-1}}\sum_{l=1}^k \min_v D_{lv}
\;\le\;
W_p(S,Z)
\;\le\;
\frac{1}{m^{p-1}}\sum_{l=1}^k 
\left(\prod_{v=1}^m D_{lv}\right)^{1/m}.
\]
The lower bound is attained as $p \to 1^+$ and the upper bound as 
$p \to \infty$.
\end{theorem}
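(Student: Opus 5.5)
The plan is to work entirely with the representation from Lemma~\ref{lemma:power_mean}. For fixed $p>1$ it gives $W_p(S,Z)=m^{-(p-1)}\sum_{l=1}^k M_r(D_{l1},\dots,D_{lm})$ with $r=-1/(p-1)$. The first thing to note is that $p\in(1,\infty)$ corresponds exactly to $r\in(-\infty,0)$. The prefactor $m^{-(p-1)}$ is the same positive constant on all three sides of the claimed inequality, so it suffices to bound each cluster's power mean $M_r(D_{l1},\dots,D_{lm})$ separately and then sum over $l$.

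For the two-sided bound I will invoke the classical power mean inequality: for fixed positive $D_{l1},\dots,D_{lm}$, the map $r\mapsto M_r$ is non-decreasing on $\mathbb{R}$. Its extended endpoints are $M_{-\infty}=\min_v D_{lv}$ and $M_0=\left(\prod_v D_{lv}\right)^{1/m}$, where $M_0$ is defined by continuity.

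The lower bound $M_r\ge \min_v D_{lv}$ also follows directly. Each $D_{lv}^r\le(\min_u D_{lu})^r$ because $r<0$. Hence $\frac1m\sum_v D_{lv}^r\le (\min_u D_{lu})^r$, and raising both sides to the negative power $1/r$ reverses the inequality.

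The upper bound $M_r\le M_0$ for $r<0$ follows from Jensen's inequality applied to the concave function $\log$. We have $\log\bigl(\frac1m\sum_v D_{lv}^r\bigr)\ge \frac1m\sum_v r\log D_{lv}$, and dividing by $r<0$ flips the sign, giving $\log M_r\le \frac1m\sum_v\log D_{lv}$. Summing over $l$ and multiplying by $m^{-(p-1)}$ yields the stated inequalities.

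For the limiting statements I will show $M_r\to\min_v D_{lv}$ as $r\to-\infty$ and $M_r\to M_0$ as $r\to0^-$.
\begin{itemize}
\item For the first limit, write $D_{\min}=\min_v D_{lv}$ and factor it out: $M_r=D_{\min}\bigl(\frac1m\sum_v (D_{lv}/D_{\min})^r\bigr)^{1/r}$. The bracket lies between $1/m$ and $1$, so its $1/r$-th power tends to $1$ as $r\to-\infty$.
\item For the second limit, use the expansion $D^r=1+r\log D+o(r)$. This gives $\log\bigl(\frac1m\sum_v D_{lv}^r\bigr)=\frac rm\sum_v\log D_{lv}+o(r)$, and dividing by $r$ gives the geometric mean (a standard l'H\^opital argument).
\end{itemize}
Since $p\to1^+$ iff $r\to-\infty$ and $p\to\infty$ iff $r\to0^-$, this gives the attainment claims.

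The main obstacle is interpreting ``attained'' correctly. The prefactor $m^{-(p-1)}$ itself depends on $p$: it tends to $1$ as $p\to1^+$ but to $0$ as $p\to\infty$. So literal convergence of $W_p$ to the upper bound is not meaningful, because both sides vanish. I would therefore state and prove attainment in the normalised sense. Specifically, I would show that the ratio of $m^{p-1}W_p(S,Z)$ to $\sum_l\min_v D_{lv}$ (respectively to $\sum_l(\prod_v D_{lv})^{1/m}$) tends to $1$. This holds with $S,Z$ held fixed, so that the $D_{lv}$ are treated as constants. I would also make that caveat explicit, since in the algorithm the $D_{lv}$ are themselves computed with exponent $p$.
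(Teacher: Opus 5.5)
Your proposal is correct and follows the paper's proof. Both rewrite $W_p$ via Lemma~\ref{lemma:power_mean} with $r=-1/(p-1)<0$, sandwich $M_r$ between $\min_v D_{lv}$ and the geometric mean $M_0$, and take $r\to-\infty$ and $r\to 0^-$. Beyond what the paper does, you prove the two power-mean inequalities and the two limits directly rather than citing them. You also state attainment as a ratio tending to $1$ for fixed $D_{lv}$, which is the meaningful reading, since the prefactor $m^{-(p-1)}$ sends both sides to $0$ as $p\to\infty$.
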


\begin{proof}
From Lemma~\ref{lemma:power_mean},
\[
W_p(S,Z)=\frac{1}{m^{p-1}}\sum_{l=1}^k M_r(D_{l1},\dots,D_{lm}),
\]
where $r=-1/(p-1)$. Since $p>1$, we have $r<0$. The power means satisfy the 
ordering
\[
\min_v D_{lv}
\le
M_r(D_{l1},\dots,D_{lm})
\le
M_0(D_{l1},\dots,D_{lm}),
\]
where $M_0$ denotes the geometric mean. Hence
\[
\min_v D_{lv}
\le
M_r(D_{l1},\dots,D_{lm})
\le
\left(\prod_{v=1}^m D_{lv}\right)^{1/m}.
\]

Multiplying by $1/m^{p-1}$ and summing over $l=1,\dots,k$ yields the stated 
bounds for $W_p(S,Z)$.

Finally, as $p\to1^+$ we have $r\to-\infty$ and $M_r\to\min_v D_{lv}$, while 
as $p\to\infty$ we have $r\to0$ and $M_r\to M_0$, the geometric mean. 
\end{proof}

The bounds in Theorem~\ref{thm:mwk_bounds} provide a clear characterisation of the behaviour of the $mwk$-means objective. The lower bound corresponds to the minimum dispersion within each cluster, while the upper bound corresponds to the geometric mean of dispersions. Importantly, these bounds are tight in the limiting cases of $p$. As $p \to 1^+$, the objective approaches a form that depends only on the smallest dispersion values, effectively emphasising the most compact features. In contrast, as $p \to \infty$, the objective approaches the geometric mean, yielding a more balanced contribution across features. This illustrates how the parameter $p$ governs the trade-off between feature selectivity and uniformity in the clustering process.

\subsection{Structure and Scaling of Feature Weights}

In this section, we characterise the structural properties of the feature weights induced by the $mwk$-means objective. We quantify how relative differences in dispersion are translated into weight ratios, and study how this mapping is modulated by the Minkowski exponent $p$.

We begin by expressing the weights in a normalised form that isolates their dependence on the dispersions.

\begin{proposition}
\label{prop:weight_structure}
For any cluster $S_l$ and features $u,v$,
\[
w_{lv} = \frac{D_{lv}^{-\frac{1}{p-1}}}{\sum_{t=1}^m D_{lt}^{-\frac{1}{p-1}}}
\quad \text{and} \quad
\frac{w_{lv}}{w_{lu}} = \left(\frac{D_{lu}}{D_{lv}}\right)^{\frac{1}{p-1}}.
\]
In particular,
\[
D_{lv} < D_{lu} \;\Longleftrightarrow\; w_{lv} > w_{lu}.
\]
\end{proposition}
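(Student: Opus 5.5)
Starting from the weight update rule~\eqref{eq:mwk_weight}, I will factor $D_{lv}^{1/(p-1)}$ out of each term in the denominator and then apply two simple monotonicity arguments. Throughout, I assume $D_{lt}>0$ for all $t$, as in Lemma~\ref{lemma:power_mean}, so that negative powers are defined.

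\textbf{First identity.} Each term of the sum in~\eqref{eq:mwk_weight} can be written as
\[
\left[\frac{D_{lv}}{D_{lt}}\right]^{\frac{1}{p-1}} = D_{lv}^{\frac{1}{p-1}}\, D_{lt}^{-\frac{1}{p-1}}.
\]
The factor $D_{lv}^{1/(p-1)}$ does not depend on $t$, so it comes out of the sum. This gives
\[
w_{lv}=\frac{1}{D_{lv}^{\frac{1}{p-1}}\sum_{t=1}^m D_{lt}^{-\frac{1}{p-1}}},
\]
which rearranges to the first identity. The same manipulation already appeared in the proof of Lemma~\ref{lemma:mwkmeans_in_terms_of_D}.

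\textbf{Ratio identity.} The denominator $\sum_{t=1}^m D_{lt}^{-1/(p-1)}$ is the same for $w_{lv}$ and $w_{lu}$. Dividing the two expressions therefore cancels it, leaving
\[
\frac{w_{lv}}{w_{lu}}=\frac{D_{lv}^{-\frac{1}{p-1}}}{D_{lu}^{-\frac{1}{p-1}}}=\left(\frac{D_{lu}}{D_{lv}}\right)^{\frac{1}{p-1}}.
\]

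\textbf{Equivalence.} Since $p>1$, the exponent $1/(p-1)$ is positive. Hence $x\mapsto x^{1/(p-1)}$ is strictly increasing on $(0,\infty)$ and fixes $1$. It follows that $D_{lv}<D_{lu}$ holds exactly when $D_{lu}/D_{lv}>1$, which holds exactly when $w_{lv}/w_{lu}>1$. Both weights are strictly positive, so this is in turn equivalent to $w_{lv}>w_{lu}$. Because the power map is strictly monotone, each step is an equivalence, which gives both directions at once.

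The only step needing care is the positivity assumption. If some $D_{lt}=0$, formula~\eqref{eq:mwk_weight} is undefined, so I will state explicitly that $D_{lt}>0$ is assumed throughout. With that assumption the rest is routine algebra.
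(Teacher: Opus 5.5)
Your proof is correct and follows the paper's argument: rearrange \eqref{eq:mwk_weight} into the normalised form, cancel the common denominator to get the ratio, and finish with monotonicity of a power map (the paper applies the strictly decreasing map $t\mapsto t^{-1/(p-1)}$ directly to the numerators, while you apply the increasing map $x\mapsto x^{1/(p-1)}$ to the ratio, which is equivalent). Stating the assumption $D_{lt}>0$ explicitly is a worthwhile addition, since the paper leaves it implicit in this proposition.
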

\begin{proof}
The first expression follows by algebraic rearrangement of (\ref{eq:mwk_weight}). For the ratio, the common denominator cancels, giving
\[
\frac{w_{lv}}{w_{lu}}
=
\frac{D_{lv}^{-\frac{1}{p-1}}}{D_{lu}^{-\frac{1}{p-1}}}
=
\left(\frac{D_{lu}}{D_{lv}}\right)^{\frac{1}{p-1}}.
\]

Finally, since $p>1$, the function $t \mapsto t^{-\frac{1}{p-1}}$ is strictly decreasing for $t>0$. Hence,
\[
D_{lv} < D_{lu} \;\Longleftrightarrow\; w_{lv} > w_{lu}.
\]
\end{proof}

Proposition~\ref{prop:weight_structure} shows that the weighting scheme is entirely governed by relative dispersion, rather than absolute scale. In particular, the ratio of any two weights depends only on the corresponding ratio of dispersions, implying that the weighting mechanism is invariant to uniform rescaling of the data. 

This representation also enables a precise characterisation of the limiting behaviour of the weights.

\begin{cor}
As $p \to 1^+$, the weights concentrate on the set of features attaining the minimum dispersion, while all other weights vanish.
\end{cor}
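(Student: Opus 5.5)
We work from the normalised form in Proposition~\ref{prop:weight_structure}, with the cluster $S_l$ fixed and every $D_{lv}>0$ (as in Theorem~\ref{thm:mwk_bounds}). Write $s = 1/(p-1)$, so that $s\to+\infty$ as $p\to1^+$. Let $D_{\min}=\min_t D_{lt}$, and let $A=\{t : D_{lt}=D_{\min}\}$ be the set of minimisers, with $|A|=a\ge 1$. The claim to prove is
\[
w_{lv}\to \frac{1}{a}\ \text{ for } v\in A,
\qquad
w_{lv}\to 0\ \text{ for } v\notin A .
\]

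The first step is to divide the numerator and the denominator of
\[
w_{lv}=\frac{D_{lv}^{-s}}{\sum_{t=1}^m D_{lt}^{-s}}
\]
by $D_{\min}^{-s}$. This gives
\[
w_{lv}=\frac{\rho_v^{\,s}}{\sum_{t=1}^m \rho_t^{\,s}},
\qquad
\rho_t = \frac{D_{\min}}{D_{lt}}\in(0,1].
\]
By definition, $\rho_t=1$ exactly when $t\in A$, and $\rho_t<1$ otherwise. The denominator therefore equals
\[
a+\sum_{t\notin A}\rho_t^{\,s}.
\]

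The second step takes the limit. Each term $\rho_t^{\,s}$ with $t\notin A$ tends to $0$ as $s\to\infty$, because $0<\rho_t<1$. There are only finitely many such terms, so the denominator tends to $a$. For $v\in A$ the numerator is identically $1$, so $w_{lv}\to 1/a$. For $v\notin A$ the numerator tends to $0$ while the denominator stays at least $a\ge1$, so $w_{lv}\to0$. The total weight on $A$ tends to $1$, and this is precisely the statement that the weights concentrate on the minimum-dispersion features.

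As a remark, the ratio form in Proposition~\ref{prop:weight_structure} gives the rate of decay directly. For $u\in A$ and $v\notin A$,
\[
\frac{w_{lv}}{w_{lu}}=\rho_v^{\,1/(p-1)}\to0,
\]
and this ratio decays exponentially in $1/(p-1)$.

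There is no serious obstacle in this argument. The one point needing care is ties in the minimum dispersion: the limit weight is shared equally, $1/a$, rather than equal to $1$. The other point is the positivity assumption $D_{lv}>0$. If some $D_{lv}=0$, the weight formula~\eqref{eq:mwk_weight} is undefined, and that degenerate case would have to be excluded or handled by convention.
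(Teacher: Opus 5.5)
Your proof is correct and rests on the same fact as the paper's proof, namely Proposition~\ref{prop:weight_structure} together with $(D_{lv}/D_{lu})^{1/(p-1)}\to 0$ whenever $D_{lv}<D_{lu}$. The paper argues only with the pairwise ratio $w_{lu}/w_{lv}$ and leaves implicit the step from vanishing ratios to vanishing weights (e.g.\ via $w_{lv}\le 1$ for a minimiser $v$). Your normalisation by $D_{\min}$ makes that step explicit and also identifies the limiting weight $1/a$ on tied minimisers.
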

\begin{proof}
From Proposition~\ref{prop:weight_structure},
\[
\frac{w_{lu}}{w_{lv}} = \left(\frac{D_{lv}}{D_{lu}}\right)^{\frac{1}{p-1}}.
\]
As $p \to 1^+$, we have $\frac{1}{p-1} \to \infty$. Hence, if $D_{lv} < D_{lu}$,
\[
\frac{w_{lu}}{w_{lv}} \to 0.
\]
which implies that only features with minimal dispersion retain a weight not tending to zero.
\end{proof}

The above result highlights a transition to a sparse regime: as $p \to 1^+$, the weighting mechanism increasingly concentrates mass on the most compact features. In this limit, the algorithm effectively disregards all features except those attaining minimal dispersion, yielding behaviour analogous to hard feature selection.

We now quantify how relative differences in dispersion translate into relative differences in weights.

\begin{theorem}
Let $p>1$. For any cluster $S_l$ and features $u,v$, if
\[
D_{lu} \ge C \, D_{lv}
\quad \text{for some } C>1,
\]
then
\[
w_{lu} \le C^{-\frac{1}{p-1}} \, w_{lv}.
\]
\end{theorem}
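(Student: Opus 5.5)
The plan is to obtain the bound directly from the weight-ratio identity in Proposition~\ref{prop:weight_structure}, combined with monotonicity of a power function. The argument is short, and the only care needed is in keeping track of the sign of the exponent.

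First, apply Proposition~\ref{prop:weight_structure} with the roles of $u$ and $v$ swapped, which gives
\[
\frac{w_{lu}}{w_{lv}} = \left(\frac{D_{lv}}{D_{lu}}\right)^{\frac{1}{p-1}}.
\]
This ratio is well defined under the standing assumption $D_{lv},D_{lu}>0$. The hypothesis $D_{lu}\ge C\,D_{lv}$ then yields $D_{lv}/D_{lu}\le 1/C$.

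Second, use that $p>1$ makes the exponent $1/(p-1)$ positive. The map $t\mapsto t^{\frac{1}{p-1}}$ is therefore increasing on $(0,\infty)$, and applying it to $D_{lv}/D_{lu}\le C^{-1}$ gives
\[
\frac{w_{lu}}{w_{lv}}\le C^{-\frac{1}{p-1}}.
\]

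Finally, multiply through by $w_{lv}$. This preserves the inequality because $w_{lv}>0$, which holds since the expression for $w_{lv}$ in Proposition~\ref{prop:weight_structure} has a strictly positive numerator and denominator whenever all dispersions are positive. The result is $w_{lu}\le C^{-\frac{1}{p-1}}w_{lv}$.

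No step is a genuine obstacle. The only point needing attention is the direction of the inequality. Proposition~\ref{prop:weight_structure} writes the weights with the negative exponent $-1/(p-1)$, so I will deliberately work with the ratio $D_{lv}/D_{lu}$ raised to the positive power $1/(p-1)$, which keeps the monotonicity increasing and avoids a sign slip. I will also state explicitly that the positivity of the dispersions is assumed, as in Lemma~\ref{lemma:power_mean}. A remark can note that the constant $C^{-1/(p-1)}$ tends to $0$ as $p\to1^+$, consistent with the preceding corollary.
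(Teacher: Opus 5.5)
Your proposal is correct and follows essentially the same route as the paper: it uses the ratio identity from Proposition~\ref{prop:weight_structure}, bounds $D_{lv}/D_{lu}\le 1/C$, and applies the positive exponent $1/(p-1)$. Your explicit checks on the positivity of $w_{lv}$ and on the monotonicity of $t\mapsto t^{1/(p-1)}$ are details the paper leaves implicit, but the argument is the same.
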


\begin{proof}
From Proposition~\ref{prop:weight_structure}, we have
\[
\frac{w_{lu}}{w_{lv}} = \left(\frac{D_{lv}}{D_{lu}}\right)^{\frac{1}{p-1}}.
\]
If $D_{lu} \ge C \, D_{lv}$, then
\[
\frac{D_{lv}}{D_{lu}} \le \frac{1}{C},
\]
and hence
\[
\frac{w_{lu}}{w_{lv}} \le C^{-\frac{1}{p-1}}.
\]
The result follows.
\end{proof}

This bound shows that weight suppression follows a power-law relationship with respect to dispersion ratios. The exponent $1/(p-1)$ controls the sensitivity of this mapping, with smaller values of $p$ amplifying differences between features and larger values attenuating them.

The dependence on $p$ can be made explicit by examining how these ratios evolve as the exponent varies.
\begin{cor}
The ratio
\[
\frac{w_{lu}}{w_{lv}} = \left(\frac{D_{lv}}{D_{lu}}\right)^{\frac{1}{p-1}}
\]
is monotone in $p$ and converges to $1$ as $p \to \infty$. Consequently, the weights become more uniform as $p$ increases, converging to $\frac{1}{m}$.
\end{cor}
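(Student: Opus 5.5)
The argument works from the ratio formula in Proposition~\ref{prop:weight_structure}. Fix a cluster $S_l$ and features $u,v$, and set $\rho := D_{lv}/D_{lu} > 0$. The ratio then reads $w_{lu}/w_{lv} = \rho^{1/(p-1)} = \exp\bigl(\tfrac{\ln \rho}{p-1}\bigr)$.

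\textbf{Monotonicity.} The map $p \mapsto 1/(p-1)$ is strictly decreasing on $(1,\infty)$.
\begin{itemize}
\item If $\rho<1$, so $\ln\rho<0$, the exponent $\ln\rho/(p-1)$ increases in $p$. The ratio is therefore increasing in $p$ and stays below $1$.
\item If $\rho>1$, the ratio is decreasing in $p$ and stays above $1$.
\item If $\rho=1$, the ratio is constant and equal to $1$.
\end{itemize}
In every case the ratio is monotone in $p$. Moreover, it moves monotonically toward $1$, because $|\ln(w_{lu}/w_{lv})| = |\ln\rho|/(p-1)$ is decreasing in $p$. This is the precise sense in which the weights become ``more uniform.''

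\textbf{Limit of the ratio.} As $p\to\infty$ we have $1/(p-1)\to 0$. By continuity of the exponential, $\rho^{1/(p-1)} \to \rho^0 = 1$. This uses that $\rho$ is a fixed positive number, so we assume all $D_{lt}>0$, as in Lemma~\ref{lemma:power_mean}, and that the dispersions are held fixed while $p$ varies.

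\textbf{Limit of the weights.} Start from the first identity of Proposition~\ref{prop:weight_structure} and divide numerator and denominator by $D_{lv}^{-1/(p-1)}$:
\[
w_{lv} = \Bigl(\sum_{t=1}^m (D_{lv}/D_{lt})^{1/(p-1)}\Bigr)^{-1}.
\]
Each of the $m$ summands tends to $1$ by the previous step. The sum therefore tends to $m$, and $w_{lv}\to 1/m$.

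\textbf{Main subtlety.} The main point needing care is interpretive rather than technical. In the algorithm the dispersions $D_{lv}$ themselves depend on $p$, through both the exponent in $|x_{iv}-z_{lv}|^p$ and the Minkowski centre. The cleanest approach is to state explicitly that the corollary is about the weighting map at fixed dispersion values. That is how the preceding theorem and corollary treat $D$ as well. Under this convention, monotonicity and convergence follow immediately from the elementary exponent argument above, with no further obstacle.
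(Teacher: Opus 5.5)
Your proof is correct and follows the route the paper leaves implicit: the paper gives no proof for this corollary and treats it as an immediate consequence of the ratio formula in Proposition~\ref{prop:weight_structure} with $1/(p-1)\downarrow 0$, which is exactly your exponent argument. Your fixed-dispersion convention is essential, not merely interpretive. If $D_{lv}=\sum_{x_i\in S_l}|x_{iv}-z_{lv}|^p$ is allowed to vary with $p$ for a fixed partition, then $h_{lv}^{p/(p-1)}\le D_{lv}^{1/(p-1)}\le |S_l|^{1/(p-1)}h_{lv}^{p/(p-1)}$, where $h_{lv}>0$ is the half-range of feature $v$ in $S_l$, so the weights tend to values proportional to $1/h_{lv}$ and in general not to $1/m$.
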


This result shows that $p$ acts as a smoothness parameter controlling the contrast of the weighting scheme. As $p$ increases, the relative differences between weights are progressively reduced, leading to a uniform allocation in the limit.

Finally, we derive a global bound that characterises the suppression of features whose dispersion is consistently larger than that of all others.

\begin{theorem}
\label{thm:suppression}
Let $p>1$. For any cluster $S_l$ and feature $u$, suppose
\[
D_{lu} \ge C \, D_{lv} \quad \text{for all } v \neq u,
\]
with $C>1$. Then
\[
w_{lu} \le \frac{1}{1 + (m-1) C^{\frac{1}{p-1}}}.
\]
\end{theorem}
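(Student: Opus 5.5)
The plan is to work directly from the normalised form of the weights in Proposition~\ref{prop:weight_structure}, namely
\[
w_{lu} = \frac{D_{lu}^{-\frac{1}{p-1}}}{\sum_{t=1}^m D_{lt}^{-\frac{1}{p-1}}}.
\]
First divide numerator and denominator by $D_{lu}^{-\frac{1}{p-1}}$. This expresses the weight purely through dispersion ratios:
\[
w_{lu} = \frac{1}{1 + \sum_{v\neq u} \left(\frac{D_{lu}}{D_{lv}}\right)^{\frac{1}{p-1}}}.
\]
Here the $t=u$ term contributes exactly $1$. Equivalently, sum the ratio identity $w_{lv}/w_{lu} = (D_{lu}/D_{lv})^{1/(p-1)}$ over $v\neq u$ and use $\sum_v w_{lv}=1$.

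Next, bound each of the $m-1$ remaining terms from below. The hypothesis gives $D_{lu}/D_{lv} \ge C$ for every $v \ne u$. Because $p>1$, the map $t\mapsto t^{\frac{1}{p-1}}$ is increasing on $t>0$. Hence each term satisfies $(D_{lu}/D_{lv})^{\frac{1}{p-1}} \ge C^{\frac{1}{p-1}}$, and the whole sum is at least $(m-1)C^{\frac{1}{p-1}}$.

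Finally, the map $s\mapsto 1/(1+s)$ is decreasing on $s \ge 0$. The lower bound on the sum therefore becomes the stated upper bound
\[
w_{lu} \le \frac{1}{1+(m-1)C^{\frac{1}{p-1}}}.
\]
As an alternative route, one could apply the preceding theorem pairwise to get $w_{lv} \ge C^{\frac{1}{p-1}} w_{lu}$ for each $v\ne u$. Summing these inequalities together with $w_{lu}$ gives $1 \ge w_{lu}\bigl(1+(m-1)C^{\frac{1}{p-1}}\bigr)$, which is the same bound.

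There is no substantive obstacle; the only point needing care is that the dispersions are strictly positive. Positivity is needed so that the negative powers $D^{-\frac{1}{p-1}}$ and the division in the first step are well defined. I would state this as a standing assumption, as in Theorem~\ref{thm:mwk_bounds}. Note that $D_{lu}>0$ already follows from $D_{lu}\ge C D_{lv}$ once the $D_{lv}$ are positive.
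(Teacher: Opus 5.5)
Your proposal is correct and follows the paper's proof: rewrite $w_{lu}$ via Proposition~\ref{prop:weight_structure} as $1/\bigl(1+\sum_{v\neq u}(D_{lu}/D_{lv})^{1/(p-1)}\bigr)$, then bound each of the $m-1$ terms below by $C^{1/(p-1)}$. Your explicit assumption of strictly positive dispersions is a sensible addition, since the paper leaves it implicit in this theorem.
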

\begin{proof}
From Proposition~\ref{prop:weight_structure}, we have
\[
w_{lu} = \frac{1}{1 + \sum_{v \neq u} \left(\frac{D_{lu}}{D_{lv}}\right)^{\frac{1}{p-1}}}.
\]
By assumption, $D_{lu} \ge C D_{lv}$ for all $v \neq u$, so
\[
\frac{D_{lu}}{D_{lv}} \ge C.
\]
Thus,
\[
\left(\frac{D_{lu}}{D_{lv}}\right)^{\frac{1}{p-1}} \ge C^{\frac{1}{p-1}}.
\]
Hence,
\[
\sum_{v \neq u} \left(\frac{D_{lu}}{D_{lv}}\right)^{\frac{1}{p-1}}
\ge (m-1) C^{\frac{1}{p-1}}.
\]
Substituting into the expression for $w_{lu}$ yields
\[
w_{lu} \le \frac{1}{1 + (m-1) C^{\frac{1}{p-1}}},
\]
which completes the proof.
\end{proof}

This bound formalises the robustness of $mwk$-means to irrelevant or noisy features. In particular, features that exhibit uniformly larger dispersion are guaranteed to receive exponentially smaller weights, with the rate of decay controlled by both the dispersion ratio and the number of features. This provides a theoretical explanation for the empirical effectiveness of $mwk$-means in settings with heterogeneous feature relevance.

\subsection{Illustration of Theoretical Properties}

In this section we illustrate the main theoretical results derived above. To do so, we first generated 10 data sets, each with 1,000 data points, four features, and three clusters. The clusters are spherical Gaussian mixtures with zero mean and unit variance. We then augmented each data set with four additional features composed of uniformly random values (noise features). All data sets were subsequently normalised according to
\[
x_{iv} = \frac{x_{iv} - \bar{x}_v}{\max_i x_{iv} - \min_i x_{iv}},
\]
where \(\bar{x}_v\) denotes the mean of feature \(v\) over the data set \(X\). We then applied the $mwk$-means algorithm to each data set for different values of the Minkowski exponent \(p \in \{1.1, 1.5, 2, 5\}\), with 20 random initialisations per data set.

Figure~\ref{fig:weights_stability} illustrates the effect of the Minkowski exponent \(p\) on the distribution of feature weights. As predicted by Proposition~\ref{prop:weight_structure}, smaller values of \(p\) amplify differences in dispersion, leading to a sparse allocation of weights concentrated on a few features. In contrast, larger values of \(p\) attenuate these differences, resulting in a more uniform distribution of weights across features. This behaviour highlights the role of \(p\) as a parameter controlling the trade-off between feature selectivity and uniformity.
\begin{figure}
    \centering
    \includegraphics[width=1\linewidth]{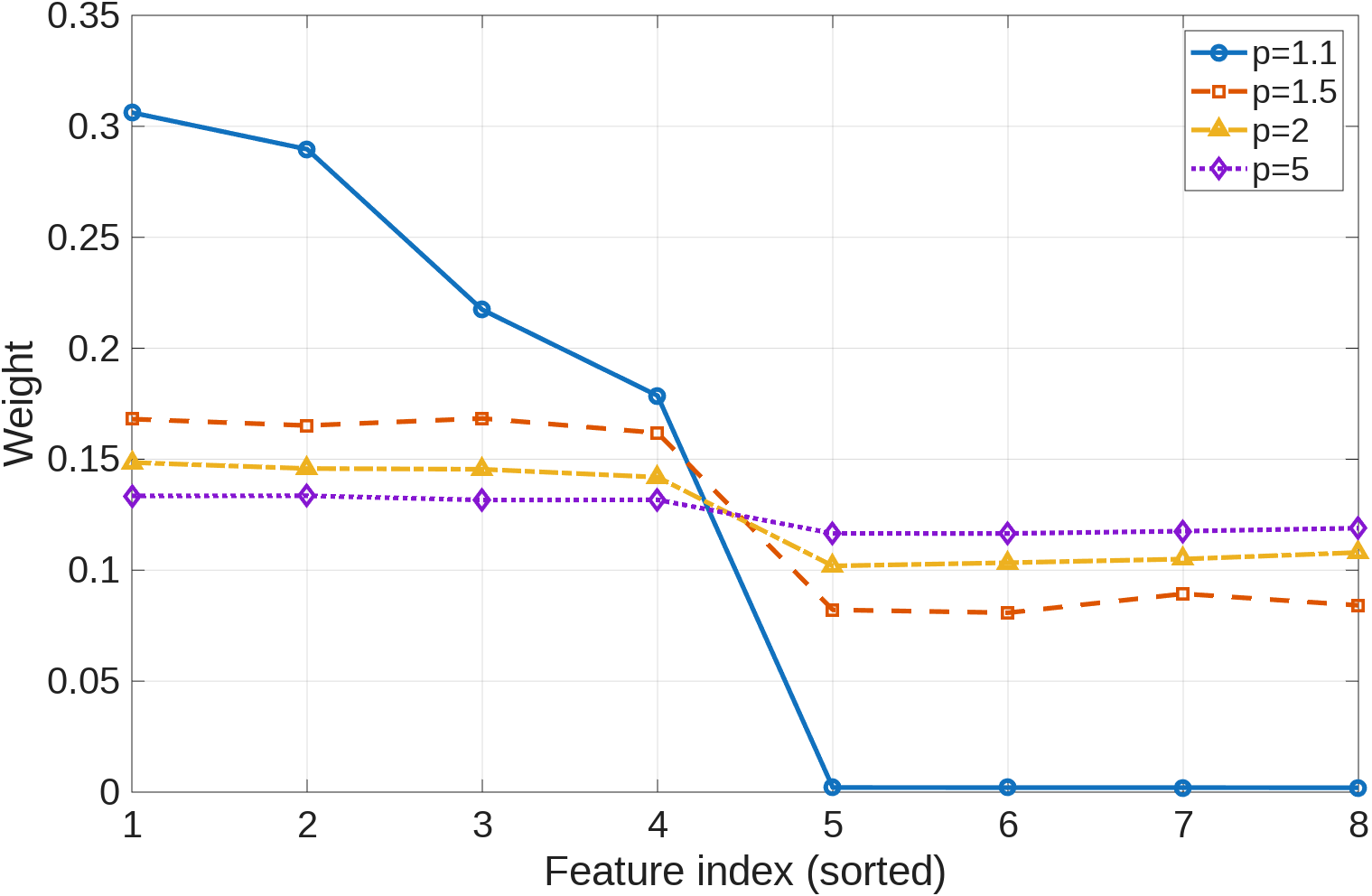}
    \caption{Sorted feature weights for different values of \(p\). Smaller \(p\) yields sparse weight distributions, while larger \(p\) produces near-uniform weights, consistent with Proposition~\ref{prop:weight_structure}.}
    \label{fig:weights_stability}
\end{figure}

Figure~\ref{fig:objective_within_bounds} illustrates the behaviour of the $mwk$-means objective relative to the bounds established in Theorem~\ref{thm:mwk_bounds}. To enable comparison across data sets and runs, we report a normalised version of the objective, obtained by linearly scaling it between its theoretical lower and upper bounds. In this representation, a value of $0$ corresponds to the lower bound, while a value of $1$ corresponds to the upper bound. All observed values lie within the interval $[0,1]$, confirming that the empirical objective satisfies the theoretical bounds for all tested values of $p$.
\begin{figure}
    \centering
    \includegraphics[width=1\linewidth]{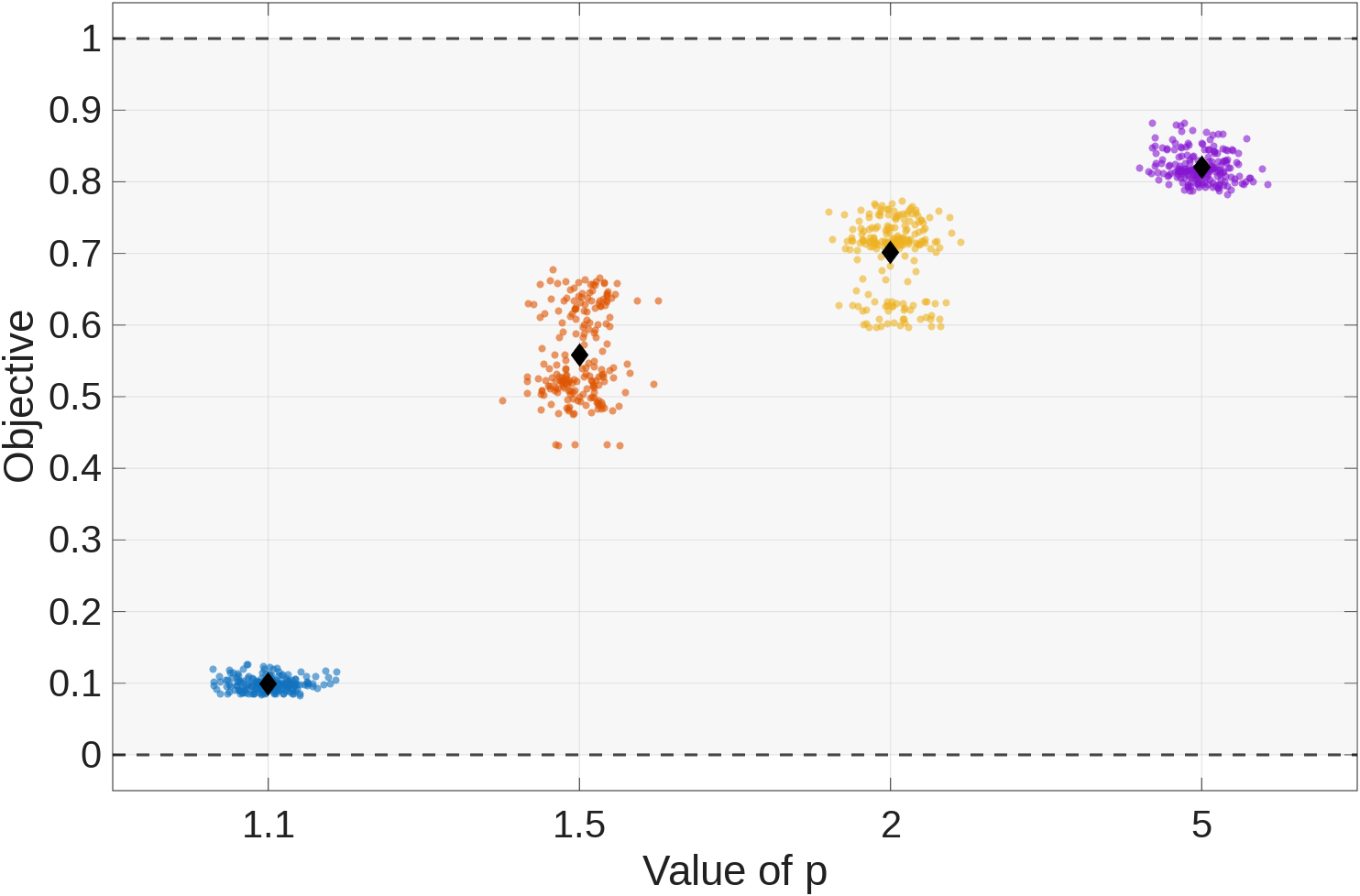}
    \caption{Values of the normalised objective across data sets and runs for different values of \(p\). The shaded region denotes the theoretical bounds \([0,1]\) established in Theorem~\ref{thm:mwk_bounds}. All observed values lie within this interval, confirming that the objective remains within the derived bounds as \(p\) varies.}
    \label{fig:objective_within_bounds}
\end{figure}

The normalised objective remains within the theoretical bounds, with mean values in the range $[0.10,\,0.82]$ across all tested values of $p$. This behaviour is consistent with Theorem~\ref{thm:mwk_bounds}, which shows that the lower and upper bounds are attained in the limits $p \to 1^+$ and $p \to \infty$, respectively.

Together, these results provide empirical support for both the structural properties of the weighting scheme and the theoretical bounds on the objective.

\section{Cluster-Adaptive Feature Extraction}
In this section, we introduce the Cluster-Adaptive Feature Extraction (CAFE) method. Its main objective is to facilitate the extraction of meaningful features from data sets containing cluster-specific noise. We start with a definition.
\begin{dfn}
\label{def::noisepair}
A (cluster, feature) pair $(l,v)$ is a \emph{noise pair} if $D_{lv} > D_{lv}^0$, where $D_{lv}^0$ denotes the dispersion of feature $v$ in cluster $S_l$ in the absence of noise corruption.
\end{dfn}

This definition formalises the assumption that noise manifests as inflated within-cluster dispersion. Since $mwk$-means assigns lower weights to features with higher within-cluster dispersion, noisy features will receive lower weights than informative ones. This motivates rescaling $X$ by these weights, as described below.
\begin{dfn}
    \label{def:rescale}
    Let \(\Tilde{X}=(\Tilde{x}_{iv})\) be the re-scaled version of \(X\). That is, if \(x_i \in S_l\) then \(\Tilde{x}_{iv}=w_{lv}x_{iv}\).
\end{dfn}

\begin{lemma}
\label{lemma:dispersion_reversal}
Within each cluster, the feature ordering by dispersion in $\tilde{X}$ is the reverse of that in $X$.
\end{lemma}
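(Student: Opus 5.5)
We compute the dispersion of each feature in the rescaled data $\tilde{X}$ directly and express it as a power of the original dispersion with a negative exponent. Fix a cluster $S_l$ and keep the partition, and the weights $w_{lv}$ produced by $mwk$-means, fixed. Within $S_l$, every coordinate $v$ is multiplied by the same positive constant $w_{lv}$. The first step is to show that the Minkowski centre of the rescaled coordinate is $\tilde z_{lv}=w_{lv}z_{lv}$. For $c=w_{lv}>0$ we have
\[
\sum_{x_i\in S_l}|c x_{iv}-z|^p = c^p \sum_{x_i\in S_l}|x_{iv}-z/c|^p,
\]
so $z$ minimises the left side iff $z/c$ minimises $f_p$. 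Proposition~\ref{prop:mink_centre_convex} gives uniqueness of that minimiser, hence $\tilde z_{lv}=c\,z_{lv}$. It follows that
\[
\tilde D_{lv} = w_{lv}^p D_{lv}.
\]

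Next, substitute the closed form of the weights from Proposition~\ref{prop:weight_structure}. Write $Q_l=\sum_{t=1}^m D_{lt}^{-\frac{1}{p-1}}$, which does not depend on $v$. Then
\[
\tilde D_{lv} = \frac{D_{lv}^{-\frac{p}{p-1}}}{Q_l^{p}}\,D_{lv} = Q_l^{-p}\, D_{lv}^{1-\frac{p}{p-1}} = Q_l^{-p}\, D_{lv}^{-\frac{1}{p-1}}.
\]
This is the same simplification as in the proof of Lemma~\ref{lemma:mwkmeans_in_terms_of_D}.

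Because $Q_l^{-p}>0$ is common to all features in the cluster, the ordering of $\tilde D_{lv}$ over $v$ is the ordering of $D_{lv}^{-1/(p-1)}$. Since $p>1$, the map $t\mapsto t^{-1/(p-1)}$ is strictly decreasing on $(0,\infty)$, so
\[
D_{lv}<D_{lu} \iff \tilde D_{lv}>\tilde D_{lu}.
\]
Ties are preserved. This is the claimed reversal. Equivalently, $\tilde D_{lv}/\tilde D_{lu} = (D_{lu}/D_{lv})^{1/(p-1)} = w_{lv}/w_{lu}$, so each rescaled dispersion is proportional to the weight of its feature.

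The main obstacle is making precise what "dispersion in $\tilde X$" means. If it means the dispersion about the Minkowski centre recomputed on $\tilde X$, the scaling-equivariance argument in the first step is required, and it relies on the uniqueness of the centre. If it means the dispersion about the rescaled original centroid, the identity $\tilde D_{lv}=w_{lv}^pD_{lv}$ is immediate. I will state that both readings give the same value. The proof also requires $D_{lv}>0$ for all $v$, so that the weights are defined and the power $-1/(p-1)$ makes sense; I will assume this as in Lemma~\ref{lemma:power_mean}.
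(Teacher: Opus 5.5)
Your proposal is correct and follows the paper's proof essentially step for step. Scaling equivariance of the Minkowski centre, using uniqueness, gives $\tilde D_{lv}=w_{lv}^pD_{lv}$, and substituting the weight formula turns this into a negative power of $D_{lv}$, which reverses the ordering. The only cosmetic difference is that you write the absolute form $\tilde D_{lv}=Q_l^{-p}D_{lv}^{-1/(p-1)}$ where the paper works with the ratio $\tilde D_{lu}/\tilde D_{lv}$, and you state explicitly the assumption $D_{lv}>0$ that the paper leaves implicit.
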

\begin{proof}
By Definition~\ref{def:rescale}, $\tilde{x}_{iv} = w_{lv} x_{iv}$ for $x_i \in S_l$. The Minkowski centre $\tilde{z}_{lv}$ minimises $\sum_{x_i \in S_l} |w_{lv} x_{iv} - z|^p$. Substituting $z = w_{lv} z'$ gives
\[
\sum_{x_i \in S_l} |w_{lv} x_{iv} - w_{lv} z'|^p = w_{lv}^p \sum_{x_i \in S_l} |x_{iv} - z'|^p,
\]
which is minimised at $z' = z_{lv}$. Hence $\tilde{z}_{lv} = w_{lv} z_{lv}$, and
\[
\tilde{D}_{lv} = \sum_{x_i \in S_l} |w_{lv} x_{iv} - w_{lv} z_{lv}|^p = w_{lv}^p D_{lv}.
\]
For any two features $u, v$ in cluster $S_l$, by Proposition~\ref{prop:weight_structure},
\[
\frac{w_{lu}}{w_{lv}} = \left(\frac{D_{lv}}{D_{lu}}\right)^{\frac{1}{p-1}},
\]
so
\[
\frac{\tilde{D}_{lu}}{\tilde{D}_{lv}} = \left(\frac{w_{lu}}{w_{lv}}\right)^p \cdot \frac{D_{lu}}{D_{lv}} = \left(\frac{D_{lv}}{D_{lu}}\right)^{\frac{p}{p-1}} \cdot \frac{D_{lu}}{D_{lv}} = \left(\frac{D_{lu}}{D_{lv}}\right)^{-\frac{p}{p-1}}\cdot\left(\frac{D_{lu}}{D_{lv}}\right)^1=\left(\frac{D_{lu}}{D_{lv}}\right)^{-\frac{1}{p-1}}.
\]
Since $-1/(p-1) < 0$ for $p > 1$, $D_{lu} > D_{lv}$ implies $\tilde{D}_{lu} < \tilde{D}_{lv}$.
\end{proof}
\begin{cor}
\label{cor:noise_suppression}
Let $(l,u)$ be a noise pair such that $D_{lu} \geq C\,D_{lv}$ for all $v \neq u$, with $C > 1$. Then
\[
\tilde{D}_{lu} \;\leq\; \frac{D_{lu}}{\left(1 + (m-1) C^{\frac{1}{p-1}}\right)^p}.
\]
\end{cor}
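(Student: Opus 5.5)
The plan is to combine the identity $\tilde{D}_{lv} = w_{lv}^p D_{lv}$, established in the proof of Lemma~\ref{lemma:dispersion_reversal}, with the weight bound of Theorem~\ref{thm:suppression}. The first step recalls that rescaling by $w_{lu}$ maps the Minkowski centre to $\tilde{z}_{lu} = w_{lu} z_{lu}$. This uses the uniqueness guaranteed by Proposition~\ref{prop:mink_centre_convex}. It then gives
\[
\tilde{D}_{lu} = w_{lu}^p D_{lu}.
\]
This holds for every cluster and feature. In particular, it holds for the noise pair $(l,u)$.

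The second step invokes Theorem~\ref{thm:suppression} directly. Its hypothesis, $D_{lu} \ge C\,D_{lv}$ for all $v \neq u$ with $C>1$, is exactly the assumption of the corollary. The theorem therefore gives
\[
0 < w_{lu} \le \frac{1}{1 + (m-1) C^{\frac{1}{p-1}}}.
\]
Since $p>1$, the map $t \mapsto t^p$ is increasing on $[0,\infty)$. Raising both sides to the power $p$ preserves the inequality. Multiplying by $D_{lu} > 0$ then yields the stated bound on $\tilde{D}_{lu}$.

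The noise-pair property $D_{lu} > D_{lu}^0$ plays no role in the inequality itself. It only supplies the interpretation that the suppressed feature is a corrupted one. The only point needing care is that the weights used for the rescaling in Definition~\ref{def:rescale} are those given by \eqref{eq:mwk_weight}, computed from the same dispersions $D_{lv}$. This is what allows Theorem~\ref{thm:suppression} to apply. I do not expect any step to be a genuine obstacle: the argument is a two-line composition of earlier results.
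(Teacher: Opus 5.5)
Your proposal is correct and follows the paper's proof: take $\tilde{D}_{lu} = w_{lu}^p D_{lu}$ from Lemma~\ref{lemma:dispersion_reversal}, apply the weight bound of Theorem~\ref{thm:suppression}, raise it to the power $p$, and multiply by $D_{lu}$. You are also right that the noise-pair condition $D_{lu} > D_{lu}^0$ is not used in the inequality itself.
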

\begin{proof}
By Lemma~\ref{lemma:dispersion_reversal}, $\tilde{D}_{lu} = w_{lu}^p D_{lu}$. By Theorem~\ref{thm:suppression},
\[
w_{lu} \leq \frac{1}{1 + (m-1) C^{\frac{1}{p-1}}}.
\]
Raising to the power $p$ and multiplying by $D_{lu}$ gives the result.
\end{proof}

Together, Lemma~\ref{lemma:dispersion_reversal} and Corollary~\ref{cor:noise_suppression} characterise the effect of the rescaling on within-cluster dispersion. The lemma shows that the rescaling reverses the dispersion ordering within each cluster, so that informative features (which had low dispersion in $X$) now have high dispersion in $\tilde{X}$. The corollary quantifies this effect for dominant noise pairs: when a noisy feature has dispersion at least $C$ times larger than all others in its cluster, its dispersion in $\tilde{X}$ is bounded above by a quantity that decreases as either $C$ or $p$ increases, reflecting stronger suppression for more dominant noise pairs and more selective values of $p$. Under our assumption that high within-cluster dispersion indicates corruption (Definition~\ref{def::noisepair}), this means that $\tilde{X}$ is a more informative input to feature extraction methods such as PCA. Since PCA finds directions of maximum variance, it will be driven by informative rather than noisy features in $\tilde{X}$, leading to components that better reflect the cluster structure of the data. Algorithm~\ref{alg:cafe} describes the steps of CAFE.

\begin{algorithm}
\caption{Cluster-Adaptive Feature Extraction (CAFE)}
\label{alg:cafe}
\begin{algorithmic}[1]
    \Require Data set $X$, number of clusters $k$, number of 
    extracted features $r$, feature extraction method $\mathcal{F}$.
    \Ensure Extracted feature matrix $\mathcal{F}(\tilde{X}) \in 
    \mathbb{R}^{n \times r}$.
    \For{$p \in \{1.1, 1.5, 2, 3\}$}
        \For{$i = 1$ until $50$}
            \State Run $mwk$-means on $X$ with exponent $p$.
        \EndFor
        \State Retain the solution $(\hat{S}^{(p)}, \hat{W}^{(p)})$ with the lowest objective value of (\ref{eq:mwk}).
    \EndFor
    \State Select as final clustering that with the highest ARI to all others. That is, find
    \[
    p^* = \arg\max_p \sum_{q \neq p} 
    \mathrm{ARI}\!\left(\hat{S}^{(p)}, \hat{S}^{(q)}\right),
    \]
    and set $S = \hat{S}^{(p^*)}$ and $W = \hat{W}^{(p^*)}$.
    \State Construct $\tilde{X}$ by rescaling each $x_i \in S_l$ 
    feature-wise with
    \[
    \tilde{x}_{iv} = w_{lv}\, x_{iv}.
    \]
    \State \Return $\mathcal{F}(\tilde{X})$, the leading $r$ components 
    of the chosen feature extraction method applied to $\tilde{X}$.
\end{algorithmic}
\end{algorithm}

\section{Experiments}

In this section, we evaluate the performance of CAFE empirically. We compare three CAFE variants against five baseline feature extraction methods on a collection of UCI benchmark data sets corrupted with controlled within-cluster noise, directly instantiating the setting described in 
Definition~\ref{def::noisepair}. Our aim is to assess whether the cluster-adaptive rescaling of CAFE leads to improved clustering quality after feature extraction, and to identify the conditions under which this improvement is most pronounced.

\subsection{Experimental setup}

We evaluate CAFE on a collection of benchmark data sets from the UCI Machine Learning Repository~\cite{Dua:2019}. We converted categorical features to numerical using one-hot encoding prior to any further processing. That is, each categorical feature was replaced with a set of binary indicator features (one per category value).

To evaluate robustness to within-cluster noise, we corrupted each data set using six configurations. We obtained these by crossing three noise magnitudes $\sigma \in \{1, 2, 5\}$ with two noise levels $\eta \in \{20\%, 40\%\}$ using the following procedure: a set of $\max(1, \text{round}(\eta \cdot k \cdot m))$ (cluster, feature) pairs is selected uniformly at random from all $k \times m$ possible pairs, and Gaussian noise $\mathcal{N}(0,\sigma^2)$ is added to the values of the selected feature within the selected cluster. 

The above led to 54 base data sets. For each of these base data sets, we generated 20 independent noisy versions, to a total of 1,080 data sets. We then normalised each noisy data set feature-wise with
\begin{equation}
x_{iv} = \frac{x_{iv} - \bar{x}_v}
{\max_i x_{iv} - \min_i x_{iv}},
\end{equation}
where $\bar{x}_v$ is the mean of feature $v$ over the full data set.

In our experiments, we set all methods to extract the same number of features, $r = \lfloor m/2 \rfloor$. Afterwards, to keep in line with the unsupervised nature of our work, we ran $k$-means 100 times using the extracted features, and selected as final clustering that with the lowest $k$-means objective. Finally, we calculated the Adjusted Rand Index (ARI) between this final clustering and the ground truth of each data set.

We experimented with three CAFE variants, CAFE+PCA, CAFE+NMF, and CAFE+ICA, corresponding to using PCA, NMF, or ICA as the downstream feature extraction method $\mathcal{F}$ (see Algorithm \ref{alg:cafe}). These are compared against five baselines that apply each feature extraction 
method directly to the original data $X$ without rescaling: PCA, NMF (50 replicates, best reconstruction), ICA (50 random initialisations, best reconstruction), a single-layer autoencoder (50 random initialisations, 100 training epochs, best mean squared reconstruction error), and UMAP.

\subsection{Results and Discussion}

Tables~\ref{tab:results_20} and~\ref{tab:results_40} report the mean ARI and standard deviation over the 20 noisy versions for each data set, noise magnitude $\sigma$, and noise level $\eta$. For each CAFE/baseline pair (CAFE+PCA vs PCA, CAFE+NMF vs NMF, CAFE+ICA vs ICA), the better result is shown in bold. For each row, we also underlined the best overall result. Across both noise levels and all values of $\sigma$, CAFE outperforms its corresponding baseline in 48 out of 54 configurations.

\begin{table}[p]
\centering
\caption{Mean ARI $\pm$ standard deviation over 20 noisy versions of the data sets for $\eta = 20\%$. Bold indicates the best result within each CAFE/baseline pair. Underline indicates the best result in a row.}
\label{tab:results_20}
\resizebox{\linewidth}{!}{
\begin{tabular}{llllllllll}
\toprule
Data set & $\sigma$ & Auto & UMAP & PCA & CAFE+PCA & NMF & CAFE+NMF & ICA & CAFE+ICA \\
\midrule
 \multirow{3}{*}{AustraCC} & 1 & $0.032 \pm 0.12$ & $0.175 \pm 0.21$ & $\underline{\mathbf{0.273}} \pm 0.24$ & $0.131 \pm 0.19$ & $\mathbf{0.259} \pm 0.25$ & $0.151 \pm 0.20$ & $\underline{\mathbf{0.273}} \pm 0.24$ & $0.218 \pm 0.24$ \\
  & 2 & $0.013 \pm 0.05$ & $0.162 \pm 0.21$ & $\underline{\mathbf{0.217}} \pm 0.21$ & $0.126 \pm 0.17$ & $\mathbf{0.170 }\pm 0.22$ & $0.153 \pm 0.20$ & $\underline{\mathbf{0.217}} \pm 0.21$ & $0.183 \pm 0.21$ \\
  & 5 & $0.080 \pm 0.18$ & $0.195 \pm 0.21$ & $\mathbf{0.321} \pm 0.22$ & $0.295 \pm 0.26$ & $0.195 \pm 0.22$ & $\mathbf{0.270} \pm 0.25$ & $0.321 \pm 0.22$ & $\underline{\mathbf{0.332}} \pm 0.24$ \\
\cmidrule{1-10}
 \multirow{3}{*}{Balance} & 1 & $0.022 \pm 0.02$ & $0.038 \pm 0.04$ & $0.025 \pm 0.03$ & $\underline{\mathbf{0.061}} \pm 0.04$ & $0.024 \pm 0.03$ & $\underline{\mathbf{0.061}} \pm 0.04$ & $0.024 \pm 0.02$ & $\mathbf{0.049} \pm 0.04$ \\
  & 2 & $0.018 \pm 0.02$ & $0.031 \pm 0.03$ & $0.023 \pm 0.03$ & $\mathbf{0.042} \pm 0.03$ & $0.029 \pm 0.03$ & $\mathbf{0.048} \pm 0.03$ & $0.031 \pm 0.03$ & $\underline{\mathbf{0.056}} \pm 0.05$ \\
  & 5 & $0.021 \pm 0.03$ & $0.043 \pm 0.04$ & $0.025 \pm 0.03$ & $\mathbf{0.064} \pm 0.05$ & $0.019 \pm 0.03$ & $\underline{\mathbf{0.072}} \pm 0.07$ & $0.025 \pm 0.03$ & $\mathbf{0.065} \pm 0.06$ \\
\cmidrule{1-10}
 \multirow{3}{*}{\shortstack{Breast\\Cancer}} & 1 & $0.823 \pm 0.04$ & $0.797 \pm 0.16$ & $0.799 \pm 0.05$ & $\underline{\mathbf{0.827}} \pm 0.08$ & $0.752 \pm 0.08$ & $\mathbf{0.785} \pm 0.09$ & $0.794 \pm 0.05$ & $\mathbf{0.825} \pm 0.08$ \\
  & 2 & $0.822 \pm 0.03$ & $0.760 \pm 0.21$ & $0.791 \pm 0.05$ & $\underline{\mathbf{0.833}} \pm 0.06$ & $0.752 \pm 0.09$ & $\mathbf{0.801} \pm 0.05$ & $0.789 \pm 0.05$ & $\mathbf{0.832} \pm 0.06$ \\
  & 5 & $0.826 \pm 0.03$ & $0.821 \pm 0.09$ & $0.808 \pm 0.03$ & $\mathbf{0.861} \pm 0.03$ & $0.798 \pm 0.04$ & $\mathbf{0.829} \pm 0.05$ & $0.809 \pm 0.03$ & $\underline{\mathbf{0.862}} \pm 0.03$ \\
\cmidrule{1-10}
 \multirow{3}{*}{CarEvaluation} & 1 & $0.027 \pm 0.03$ & $0.028 \pm 0.06$ & $\mathbf{0.060} \pm 0.05$ & $0.043 \pm 0.07$ & $\underline{\mathbf{0.066}} \pm 0.06$ & $0.064 \pm 0.07$ & $\mathbf{0.052} \pm 0.05$ & $0.037 \pm 0.08$ \\
  & 2 & $0.037 \pm 0.06$ & $0.040 \pm 0.04$ & $0.060 \pm 0.06$ & $\underline{\mathbf{0.071}} \pm 0.06$ & $\mathbf{0.051} \pm 0.05$ & $0.042 \pm 0.07$ & $\mathbf{0.056} \pm 0.05$ & $0.055 \pm 0.07$ \\
  & 5 & $0.021 \pm 0.04$ & $0.022 \pm 0.04$ & $0.053 \pm 0.05$ & $\mathbf{0.066} \pm 0.07$ & $0.036 \pm 0.04$ & $\mathbf{0.070} \pm 0.11$ & $0.037 \pm 0.04$ & $\underline{\mathbf{0.088}} \pm 0.12$ \\
\cmidrule{1-10}
 \multirow{3}{*}{Ecoli} & 1 & $0.110 \pm 0.07$ & $0.279 \pm 0.05$ & $0.196 \pm 0.09$ & $\underline{\mathbf{0.332}} \pm 0.18$ & $0.177 \pm 0.08$ & $\mathbf{0.294} \pm 0.15$ & $0.205 \pm 0.10$ & $\mathbf{0.325} \pm 0.16$ \\
  & 2 & $0.152 \pm 0.08$ & $0.298 \pm 0.05$ & $0.220 \pm 0.09$ & $\underline{\mathbf{0.403}} \pm 0.17$ & $0.212 \pm 0.11$ & $\mathbf{0.312} \pm 0.10$ & $0.253 \pm 0.12$ & $\mathbf{0.370} \pm 0.14$ \\
  & 5 & $0.141 \pm 0.11$ & $0.260 \pm 0.08$ & $0.176 \pm 0.10$ & $\underline{\mathbf{0.355}} \pm 0.16$ & $0.158 \pm 0.11$ & $\mathbf{0.303} \pm 0.12$ & $0.192 \pm 0.12$ & $\mathbf{0.328} \pm 0.13$ \\
\cmidrule{1-10}
 \multirow{3}{*}{Glass} & 1 & $0.096 \pm 0.09$ & $0.223 \pm 0.06$ & $0.165 \pm 0.07$ & $\mathbf{0.236} \pm 0.08$ & $0.130 \pm 0.07$ & $\underline{\mathbf{0.266}} \pm 0.10$ & $0.154 \pm 0.10$ & $\mathbf{0.261} \pm 0.09$ \\
  & 2 & $0.090 \pm 0.10$ & $0.208 \pm 0.07$ & $0.136 \pm 0.06$ & $\mathbf{0.261} \pm 0.13$ & $0.134 \pm 0.08$ & $\underline{\mathbf{0.278}} \pm 0.14$ & $0.150 \pm 0.08$ & $\mathbf{0.273} \pm 0.13$ \\
  & 5 & $0.115 \pm 0.09$ & $0.233 \pm 0.10$ & $0.135 \pm 0.06$ & $\mathbf{0.247} \pm 0.14$ & $0.114 \pm 0.07$ & $\mathbf{0.254} \pm 0.10$ & $0.121 \pm 0.07$ & $\underline{\mathbf{0.257}} \pm 0.13$ \\
\cmidrule{1-10}
 \multirow{3}{*}{Ionosphere} & 1 & $-0.018 \pm 0.03$ & $0.132 \pm 0.03$ & $\underline{\mathbf{0.152}} \pm 0.02$ & $0.148 \pm 0.02$ & $0.085 \pm 0.04$ & $\mathbf{0.106} \pm 0.04$ & $\mathbf{0.151} \pm 0.02$ & $0.142 \pm 0.01$ \\
  & 2 & $-0.026 \pm 0.02$ & $0.129 \pm 0.03$ & $0.146 \pm 0.02$ & $\underline{\mathbf{0.148}} \pm 0.05$ & $0.078 \pm 0.04$ & $\mathbf{0.099} \pm 0.04$ & $0.135 \pm 0.02$ & $\mathbf{0.146} \pm 0.05$ \\
  & 5 & $-0.024 \pm 0.02$ & $0.133 \pm 0.02$ & $0.139 \pm 0.02$ & $\mathbf{0.184} \pm 0.19$ & $0.075 \pm 0.05$ & $\mathbf{0.166} \pm 0.12$ & $0.134 \pm 0.02$ & $\underline{\mathbf{0.187}} \pm 0.19$ \\
\cmidrule{1-10}
 \multirow{3}{*}{Iris} & 1 & $0.450 \pm 0.06$ & $0.772 \pm 0.11$ & $0.708 \pm 0.10$ & $\mathbf{0.728} \pm 0.11$ & $0.750 \pm 0.13$ & $\underline{\mathbf{0.781}} \pm 0.11$ & $0.684 \pm 0.16$ & $\mathbf{0.756} \pm 0.13$ \\
  & 2 & $0.447 \pm 0.11$ & $\underline{0.790} \pm 0.12$ & $0.682 \pm 0.10$ & $\mathbf{0.701} \pm 0.16$ & $\mathbf{0.735} \pm 0.12$ & $0.732 \pm 0.16$ & $0.634 \pm 0.15$ & $\mathbf{0.714} \pm 0.17$ \\
  & 5 & $0.449 \pm 0.06$ & $\underline{0.745} \pm 0.13$ & $\mathbf{0.692} \pm 0.11$ & $0.597 \pm 0.21$ & $\mathbf{0.737} \pm 0.11$ & $0.696 \pm 0.18$ & $\mathbf{0.644} \pm 0.14$ & $0.621 \pm 0.23$ \\
\cmidrule{1-10}
 \multirow{3}{*}{\shortstack{Teaching\\Assistant}} & 1 & $0.026 \pm 0.01$ & $0.026 \pm 0.01$ & $0.027 \pm 0.01$ & $\underline{\mathbf{0.072}} \pm 0.08$ & $0.024 \pm 0.01$ & $\mathbf{0.050} \pm 0.05$ & $0.029 \pm 0.01$ & $\mathbf{0.056} \pm 0.06$ \\
  & 2 & $0.025 \pm 0.02$ & $0.021 \pm 0.01$ & $0.027 \pm 0.01$ & $\underline{\mathbf{0.183}} \pm 0.17$ & $0.022 \pm 0.01$ & $\mathbf{0.088} \pm 0.11$ & $0.022 \pm 0.01$ & $\mathbf{0.118} \pm 0.11$ \\
  & 5 & $0.025 \pm 0.01$ & $0.024 \pm 0.01$ & $0.025 \pm 0.01$ & $\underline{\mathbf{0.162}} \pm 0.16$ & $0.022 \pm 0.01$ & $\mathbf{0.151} \pm 0.13$ & $0.025 \pm 0.01$ & $\mathbf{0.151} \pm 0.14$ \\
\bottomrule
\end{tabular}
}
\end{table}

At $\eta = 20\%$ (Table \ref{tab:results_20}), CAFE improves over its corresponding baselines in 22 out of 27 configurations. The five exceptions occur at lower noise magnitudes: Australian Credit Card and Car Evaluation at $\sigma \in \{1, 2\}$, Ionosphere at $\sigma = 1$, and Iris at $\sigma = 5$. This is consistent with the theoretical justification of CAFE: when the noise magnitude is small, the inflation of within-cluster dispersion at noise pairs may not be large enough to clearly separate them from informative features, reducing the effectiveness of the rescaling. It is also worth noting that several cells exhibit large standard deviations relative to the mean, reflecting the variability introduced by the random selection of noise pairs across the 20 realisations. This is particularly evident for Ionosphere at $\sigma = 5$, where the mean ARI and standard deviation are high, suggesting that the outcome depends strongly on which (cluster, feature) pairs happen to be corrupted.

\begin{table}[p]
\centering
\caption{Mean ARI $\pm$ standard deviation over 20 noisy versions of the data sets for $\eta = 40\%$. Bold indicates the best result within each CAFE/baseline pair. Underline indicates the best result in a row.}
\label{tab:results_40}
\resizebox{\linewidth}{!}{
\begin{tabular}{llll|ll|ll|ll}
\toprule
Data set & $\sigma$ & Auto & UMAP & PCA & CAFE+PCA & NMF & CAFE+NMF & ICA & CAFE+ICA \\
\midrule
 \multirow{3}{*}{AustraCC} & 1 & $0.064 \pm 0.14$ & $0.121 \pm 0.19$ & $0.181 \pm 0.21$ & $\underline{\mathbf{0.424}} \pm 0.22$ & $0.125 \pm 0.20$ & $\mathbf{0.266} \pm 0.21$ & $0.166 \pm 0.20$ & $\mathbf{0.409} \pm 0.25$ \\
  & 2 & $0.086 \pm 0.16$ & $0.137 \pm 0.20$ & $0.199 \pm 0.24$ & $\underline{\mathbf{0.446}} \pm 0.22$ & $0.124 \pm 0.21$ & $\mathbf{0.218} \pm 0.23$ & $0.199 \pm 0.24$ & $\mathbf{0.400} \pm 0.24$ \\
  & 5 & $0.046 \pm 0.12$ & $0.126 \pm 0.18$ & $0.235 \pm 0.24$ & $\underline{\mathbf{0.492}} \pm 0.27$ & $0.074 \pm 0.16$ & $\mathbf{0.322} \pm 0.31$ & $0.235 \pm 0.24$ & $\mathbf{0.465} \pm 0.28$ \\
\cmidrule{1-10}
 \multirow{3}{*}{Balance} & 1 & $0.029 \pm 0.03$ & $0.018 \pm 0.05$ & $0.021 \pm 0.03$ & $\mathbf{0.075} \pm 0.09$ & $0.022 \pm 0.03$ & $\underline{\mathbf{0.100}} \pm 0.11$ & $0.022 \pm 0.03$ & $\mathbf{0.083} \pm 0.09$ \\
  & 2 & $0.018 \pm 0.02$ & $0.027 \pm 0.04$ & $0.028 \pm 0.03$ & $\mathbf{0.097} \pm 0.11$ & $0.019 \pm 0.03$ & $\mathbf{0.126} \pm 0.13$ & $0.023 \pm 0.03$ & $\underline{\mathbf{0.143}} \pm 0.17$ \\
  & 5 & $0.032 \pm 0.03$ & $0.013 \pm 0.02$ & $0.033 \pm 0.04$ & $\underline{\mathbf{0.245}} \pm 0.32$ & $0.021 \pm 0.03$ & $\mathbf{0.144} \pm 0.20$ & $0.016 \pm 0.02$ & $\mathbf{0.143} \pm 0.21$ \\
\cmidrule{1-10}
 \multirow{3}{*}{\shortstack{Breast\\Cancer}} & 1 & $0.792 \pm 0.07$ & $0.825 \pm 0.03$ & $0.723 \pm 0.08$ & $\underline{\mathbf{0.837}} \pm 0.03$ & $0.659 \pm 0.11$ & $\mathbf{0.792} \pm 0.06$ & $0.720 \pm 0.08$ & $\mathbf{0.835} \pm 0.03$ \\
  & 2 & $0.781 \pm 0.06$ & $0.790 \pm 0.13$ & $0.701 \pm 0.08$ & $\underline{\mathbf{0.798}} \pm 0.12$ & $0.676 \pm 0.09$ & $\mathbf{0.776} \pm 0.09$ & $0.701 \pm 0.08$ & $\underline{\mathbf{0.798}} \pm 0.12$ \\
  & 5 & $0.787 \pm 0.05$ & $\underline{0.823} \pm 0.04$ & $0.718 \pm 0.08$ & $\mathbf{0.819} \pm 0.04$ & $0.670 \pm 0.10$ & $\mathbf{0.786} \pm 0.06$ & $0.715 \pm 0.08$ & $\mathbf{0.818} \pm 0.04$ \\
\cmidrule{1-10}
 \multirow{3}{*}{CarEvaluation} & 1 & $0.020 \pm 0.04$ & $0.023 \pm 0.03$ & $0.022 \pm 0.02$ & $\mathbf{0.037} \pm 0.04$ & $0.029 \pm 0.03$ & $\underline{\mathbf{0.042}} \pm 0.04$ & $0.027 \pm 0.03$ & $\mathbf{0.040} \pm 0.05$ \\
  & 2 & $0.009 \pm 0.04$ & $0.031 \pm 0.04$ & $0.019 \pm 0.03$ & $\underline{\mathbf{0.097}} \pm 0.12$ & $0.017 \pm 0.03$ & $\mathbf{0.046} \pm 0.05$ & $0.022 \pm 0.04$ & $\mathbf{0.076} \pm 0.09$ \\
  & 5 & $0.017 \pm 0.04$ & $0.047 \pm 0.10$ & $0.018 \pm 0.04$ & $\underline{\mathbf{0.251}} \pm 0.27$ & $0.013 \pm 0.03$ & $\mathbf{0.169} \pm 0.28$ & $0.018 \pm 0.04$ & $\mathbf{0.173} \pm 0.27$ \\
\cmidrule{1-10}
 \multirow{3}{*}{Ecoli} & 1 & $0.053 \pm 0.07$ & $0.197 \pm 0.06$ & $0.116 \pm 0.11$ & $\underline{\mathbf{0.352}} \pm 0.19$ & $0.086 \pm 0.10$ & $\mathbf{0.295} \pm 0.16$ & $0.113 \pm 0.11$ & $\mathbf{0.263} \pm 0.13$ \\
  & 2 & $0.028 \pm 0.04$ & $0.186 \pm 0.05$ & $0.082 \pm 0.06$ & $\underline{\mathbf{0.339}} \pm 0.19$ & $0.056 \pm 0.05$ & $\mathbf{0.261} \pm 0.15$ & $0.093 \pm 0.06$ & $\mathbf{0.318} \pm 0.14$ \\
  & 5 & $0.023 \pm 0.04$ & $0.166 \pm 0.05$ & $0.077 \pm 0.08$ & $\underline{\mathbf{0.324}} \pm 0.22$ & $0.050 \pm 0.06$ & $\mathbf{0.249} \pm 0.21$ & $0.074 \pm 0.05$ & $\mathbf{0.245} \pm 0.16$ \\
\cmidrule{1-10}
 \multirow{3}{*}{Glass} & 1 & $0.032 \pm 0.05$ & $0.154 \pm 0.07$ & $0.098 \pm 0.07$ & $\mathbf{0.240} \pm 0.11$ & $0.076 \pm 0.06$ & $\mathbf{0.223} \pm 0.09$ & $0.098 \pm 0.06$ & $\underline{\mathbf{0.243}} \pm 0.10$ \\
  & 2 & $0.014 \pm 0.03$ & $0.162 \pm 0.07$ & $0.083 \pm 0.05$ & $\mathbf{0.253} \pm 0.09$ & $0.064 \pm 0.03$ & $\mathbf{0.252} \pm 0.13$ & $0.089 \pm 0.05$ & $\underline{\mathbf{0.258}} \pm 0.16$ \\
  & 5 & $0.032 \pm 0.07$ & $0.154 \pm 0.09$ & $0.081 \pm 0.05$ & $\underline{\mathbf{0.253}} \pm 0.18$ & $0.068 \pm 0.05$ & $\mathbf{0.198} \pm 0.14$ & $0.078 \pm 0.07$ & $\mathbf{0.229} \pm 0.19$ \\
\cmidrule{1-10}
 \multirow{3}{*}{Ionosphere} & 1 & $-0.005 \pm 0.04$ & $0.121 \pm 0.04$ & $0.127 \pm 0.03$ & $\mathbf{0.134} \pm 0.03$ & $0.056 \pm 0.04$ & $\mathbf{0.067} \pm 0.04$ & $0.123 \pm 0.03$ & $\underline{\mathbf{0.138}} \pm 0.03$ \\
  & 2 & $-0.006 \pm 0.04$ & $0.125 \pm 0.04$ & $0.124 \pm 0.03$ & $\mathbf{0.209} \pm 0.31$ & $0.052 \pm 0.04$ & $\mathbf{0.201} \pm 0.23$ & $0.124 \pm 0.03$ & $\underline{\mathbf{0.270}} \pm 0.35$ \\
  & 5 & $0.027 \pm 0.07$ & $0.116 \pm 0.05$ & $0.115 \pm 0.03$ & $\underline{\mathbf{0.680}} \pm 0.36$ & $0.063 \pm 0.04$ & $\mathbf{0.437} \pm 0.36$ & $0.112 \pm 0.04$ & $\mathbf{0.641} \pm 0.37$ \\
\cmidrule{1-10}
 \multirow{3}{*}{Iris} & 1 & $0.306 \pm 0.23$ & $0.429 \pm 0.22$ & $0.303 \pm 0.14$ & $\underline{\mathbf{0.649}} \pm 0.20$ & $0.293 \pm 0.22$ & $\mathbf{0.556} \pm 0.19$ & $0.304 \pm 0.14$ & $\mathbf{0.558} \pm 0.21$ \\
  & 2 & $0.338 \pm 0.32$ & $0.443 \pm 0.27$ & $0.328 \pm 0.26$ & $\underline{\mathbf{0.567}} \pm 0.24$ & $0.307 \pm 0.26$ & $\mathbf{0.495} \pm 0.25$ & $0.294 \pm 0.19$ & $\mathbf{0.505} \pm 0.24$ \\
  & 5 & $0.349 \pm 0.33$ & $\underline{0.466} \pm 0.20$ & $0.357 \pm 0.28$ & $\mathbf{0.422} \pm 0.27$ & $0.355 \pm 0.29$ & $\mathbf{0.423} \pm 0.29$ & $0.281 \pm 0.21$ & $\mathbf{0.382} \pm 0.24$ \\
\cmidrule{1-10}
 \multirow{3}{*}{\shortstack{Teaching\\Assistant}} & 1 & $0.019 \pm 0.02$ & $0.023 \pm 0.02$ & $0.023 \pm 0.02$ & $\mathbf{0.120} \pm 0.15$ & $0.013 \pm 0.02$ & $\mathbf{0.050} \pm 0.10$ & $0.023 \pm 0.02$ & $\underline{\mathbf{0.194}} \pm 0.18$ \\
  & 2 & $0.022 \pm 0.02$ & $0.039 \pm 0.02$ & $0.020 \pm 0.01$ & $\underline{\mathbf{0.167}} \pm 0.18$ & $0.021 \pm 0.01$ & $\mathbf{0.099} \pm 0.10$ & $0.019 \pm 0.01$ & $\mathbf{0.123} \pm 0.17$ \\
  & 5 & $0.012 \pm 0.01$ & $0.042 \pm 0.06$ & $0.015 \pm 0.01$ & $\underline{\mathbf{0.182}} \pm 0.19$ & $0.013 \pm 0.01$ & $\mathbf{0.040} \pm 0.04$ & $0.015 \pm 0.01$ & $\mathbf{0.148} \pm 0.19$ \\
\bottomrule
\end{tabular}
}
\end{table}

At $\eta = 40\%$ (Table \ref{tab:results_40}), CAFE outperforms its corresponding baselines on all 27 configurations. The improvement over the baselines is also generally larger at this noise level, reflecting the fact that a higher proportion of corrupted pairs creates a clearer dispersion signal for the rescaling to exploit. Together, the two tables show a consistent pattern: the advantage of CAFE over its baselines grows with both the noise level $\eta$ and the noise magnitude $\sigma$. The Autoencoder and UMAP baselines are generally weaker than PCA, NMF, and ICA across both tables, suggesting that these methods are less suited to recovering cluster structure from noisy data in this setting. Among the three CAFE variants, no single one dominates consistently, though CAFE+PCA and CAFE+ICA tend to show the largest absolute improvements over their respective baselines.

\section{Conclusion}

In this paper, we provide a theoretical analysis of the $mwk$-means algorithm and introduce CAFE, a new method for cluster-adaptive feature extraction. On the theoretical side, we show that the $mwk$-means objective function can be expressed as a power-mean aggregation of within-cluster dispersions, yielding a unified interpretation of the role of the Minkowski exponent. This formulation enables us to derive bounds for the objective, characterise the structure and scaling behaviour of the feature weights, and establish convergence guarantees for the algorithm. 
This perspective clarifies how the choice of the exponent governs the trade-off between feature selectivity and uniformity, and explains the mechanism by which features with higher dispersion are suppressed. Features that consistently receive low weights across a range of values of $p$ may be interpreted as robustly irrelevant across different distance geometries and could be considered for removal prior to clustering.

Building on this theoretical foundation, we introduce CAFE that uses the feature weights produced by $mwk$-means to rescale the data prior to unsupervised feature extraction. We prove that this rescaling reverses the within-cluster dispersion ordering, so that noisy features --- which inflate within-cluster dispersion --- receive lower dispersion in the rescaled data, while informative features are amplified. This provides a principled justification for applying standard feature extraction methods such as PCA, NMF, or ICA to the rescaled data. Experiments on nine UCI benchmark data sets corrupted with controlled within-cluster noise show that CAFE improves clustering quality over five baseline methods in 48 out of 54 experimental configurations, with the improvement growing consistently with both noise magnitude and noise level.

Future work may explore extensions of CAFE to other clustering formulations, investigate the sensitivity of the method to the choice of $p$, and study its behaviour on data sets where the assumption linking high within-cluster dispersion to noise is only partially satisfied.

\bibliographystyle{ieeetr}
\bibliography{references}

\end{document}